\newcommand{\ours}{PIPE\,}
\definecolor{Gray}{gray}{0.85}
\newcommand{\cmark}{\textcolor{OliveGreen}{\ding{51}}}%
\newcommand{\xmark}{\textcolor{red}{\ding{55}}}%
\definecolor{Gray}{gray}{0.85}
\newtheorem{theorem}{Theorem}[section]
\newtheorem{lemma}[theorem]{Lemma}
\newtheorem{corollary}[theorem]{Corollary}
\journal{Arxiv}
\begin{document}

\begin{frontmatter}

%% Title, authors and addresses

%% use the tnoteref command within \title for footnotes;
%% use the tnotetext command for theassociated footnote;
%% use the fnref command within \author or \address for footnotes;
%% use the fntext command for theassociated footnote;
%% use the corref command within \author for corresponding author footnotes;
%% use the cortext command for theassociated footnote;
%% use the ead command for the email address,
%% and the form \ead[url] for the home page:
%% \title{Title\tnoteref{label1}}
%% \tnotetext[label1]{}
%% \author{Name\corref{cor1}\fnref{label2}}
%% \ead{email address}
%% \ead[url]{home page}
%% \fntext[label2]{}
%% \cortext[cor1]{}
%% \affiliation{organization={},
%%             addressline={},
%%             city={},
%%             postcode={},
%%             state={},
%%             country={}}
%% \fntext[label3]{}

\title{\ours: Parallelized Inference Through Post-Training Quantization Ensembling of Residual Expansions}

%% use optional labels to link authors explicitly to addresses:
%% \author[label1,label2]{}
%% \affiliation[label1]{organization={},
%%             addressline={},
%%             city={},
%%             postcode={},
%%             state={},
%%             country={}}
%%
%% \affiliation[label2]{organization={},
%%             addressline={},
%%             city={},
%%             postcode={},
%%             state={},
%%             country={}}

\author[label1,label2]{Edouard Yvinec}
\author[label1]{Arnaud Dapogny}
\author[label1,label2]{Kevin Bailly}

\affiliation[label1]{organization={Datakalab},
            addressline={143 Avenue Charles de Gaulle}, 
            city={Neuilly-sur-Seine},
            postcode={92200}, 
            state={Ile-de-France},
            country={France}}

\affiliation[label2]{organization={Sorbonne Université, CNRS, ISIR},
            addressline={4 Place Jussieu}, 
            city={Paris},
            postcode={f-75005}, 
            state={Ile-de-France},
            country={France}}
\begin{abstract}
Deep neural networks (DNNs) are ubiquitous in computer vision and natural language processing, but suffer from high inference cost. This problem can be addressed by quantization, which consists in converting floating point operations into a lower bit-width format. With the growing concerns on privacy rights, we focus our efforts on data-free methods. However, such techniques suffer from their lack of adaptability to the target devices, as a hardware typically only support specific bit widths. Thus, to adapt to a variety of devices, a quantization method shall be flexible enough to find good accuracy \textit{v.s.} speed trade-offs for every bit width and target device. To achieve this, we propose \ours, a quantization method that leverages residual error expansion, along with group sparsity and an ensemble approximation for better parallelization. \ours is backed off by strong theoretical guarantees and achieves superior performance on every benchmarked application (from vision to NLP tasks), architecture (ConvNets, transformers) and bit-width (from int8 to ternary quantization).
\end{abstract}

%%Graphical abstract
% \begin{graphicalabstract}
% %\includegraphics{grabs}
% \end{graphicalabstract}

%%Research highlights
% \begin{highlights}
% \item Deep neural networks benefits from quantization for inference
% \item Quantization techniques offer few trade-offs between speed and accuracy given a target hardware
% \item Residual expansions of quantized network enable multiple speed and accuracy trade-offs
% \item PIPE derives multiple networks from a single expanded network
% \item PIPE leverages ensembles of quantized neural networks
% \end{highlights}

\begin{keyword}
%% keywords here, in the form: keyword \sep keyword

%% PACS codes here, in the form: \PACS code \sep code

%% MSC codes here, in the form: \MSC code \sep code
%% or \MSC[2008] code \sep code (2000 is the default)
Quantization \sep Deep Learning \sep LLM \sep Ensemble \sep Efficient Inference
\end{keyword}

\end{frontmatter}

\section{Introduction}
Deep neural networks (DNNs) achieve outstanding performance on several challenging computer vision tasks such as image classification \cite{he2016deep}, object detection \cite{liu2016ssd} and semantic segmentation \cite{chen2017rethinking}, as well as natural language processing benchmarks such as text classification \cite{devlin2018bert}. However, their accuracy comes at a high computational inference cost which limits their deployment, more so on edge devices when real-time treatment as well as energy consumption are a concern. This problem can be tackled \textit{via} DNN quantization, \textit{i.e.} by reducing the bit-width representation of the computations from floating point operations (FP) to e.g. int8 (8-bits integer representation), int4, int3 or even lower-bit representation such as ternary (where weights values are either $-1$, $0$ or $+1$) quantization. Because DNN inference principally relies on matrix multiplication, such quantization dramatically diminishes the number of bit-wise operations (as defined in \cite{krishnamoorthi2018quantizing}), thus limiting the DNN latency and energy consumption. However, DNN quantization usually comes at the expense of the network accuracy. As a consequence, DNN quantization is an active field of research \cite{courbariaux2016binarized,wu2018training,jacob2018quantization,achterhold2018variational,louizos2018relaxed,sheng2018quantization,choi2022s,zhong2022intraq} that aims at limiting this accuracy drop while reducing the number of bit-wise operations.

All the aforementioned methods are data-driven, as they either involve training a network from scratch or fine-tune an already trained and quantized one. However, while such approaches usually allow lower quantization errors using low bit-wise representations, due to the growing concerns on privacy rights and data privacy, there is an ever-increasing number of real-case scenarios (e.g. health and military services) where data may not be available for quantization purposes. Motivated by these observations, recently, several data-free quantization algorithms were published \cite{nagel2019data,meller2019same,zhao2019improving,cai2020zeroq,zhang2021diversifying,squant2022}, focusing on the quantization operator, \textit{i.e.} the transformation which maps the floating point weights to their low-bit, fixed point, values. However, these approaches still struggle to offer an interesting alternative to data-driven techniques in terms of accuracy preservation. Furthermore, when considering a specific target device for deployment, traditional quantization methods, usually focusing on the quantization operator, offer limited options: given a supported bit width (given by the device, as most hardware usually support only a few representation formats \cite{nivdiaA100}) they either achieve satisfactory accuracy or not.

In our previous work, REx \cite{yvinec2023rex}, we introduced a novel residual expansion of quantized deep neural networks. Drawing inspiration from wavelets-based methods for image compression \cite{rabbani2002jpeg2000,mallat2009theory}, this approach considers successive residual quantization errors between the quantized and original model. As such, REx can provide several accuracy \textit{vs.} speed trade-off points for each bit width. Increasing the number of residuals in the expansion (\textit{i.e.} the expansion order) increases the fidelity to the original, non-quantized model at the expanse of additional computations. In addition, we proposed a group-sparse expansion, which allows us to maintain the accuracy using significantly less bit operations.

In this work, we propose an extension of this work, dubbed PIPE, that leverages parallelization capabilities of modern hardware. Formally, from a residual expansion of a quantized model, we can group together several terms in the expansion and, under certain assumptions, completely separate the computation between several subnetworks of a resulting ensemble. This ensemble approximation, depending on the parallelization capacities of the target hardware, may result in dramatic speed enhancement. Therefore, given a target device, our approach allows finding the best accuracy \textit{vs.} speed trade-offs. 
Our contributions are thus three-fold:
\begin{itemize}
    \item \textbf{\ours, a data-free quantization method that is both efficient and flexible.} \ours leverages residual quantization, with an ensemble approximation, to enable finding suitable trade-offs depending on a target bit-width and parallelization capacity.
    \item \textbf{Theoretical guarantees} on both the exponential convergence of the quantized model towards the full-precision model, and ensemble approximation errors. This is of paramount importance in a data-free context, where we cannot easily measure the accuracy degradation.
    \item \textbf{Extensive empirical validation} we show through a thorough validation that \ours significantly outperforms every state-of-the-art data-free quantization technique as a standalone method but also helps improve said methods when used in combination. In particular, \ours achieves outstanding performances on both standard and low bit range quantization on various ConvNet architectures applied to ImageNet classification, Pascal VOC object detection and CityScapes semantic segmentation.
\end{itemize}
In addition, PIPE is agnostic to the quantization operator and can be combined with most recent state-of-the-art methods that focus on the latter.

\section{Related Work}
\subsection{Quantization}
In this section, we review existing methods for DNN quantization, with an emphasis on approaches geared towards run-time acceleration.
The vast majority of DNN quantization techniques rely on data usage (Quantization-Aware Training) and \cite{wu2018training,jacob2018quantization,achterhold2018variational,louizos2018relaxed,sheng2018quantization,ullrich2017soft,zhou2016dorefa} usually rely on variants of the straight through estimation to alleviate the rounding operation gradients.
Among these methods, \cite{oh2021automated} bears the most resemblance with the proposed \ours method. It minimizes the residual error during training, using weight decay over the residue. The similarity with \ours comes from the use of a second order expansion of the quantization errors. However, it discards the quantization error after training, while we propose to keep the extra operations in order to ensure a high fidelity to the provided pre-trained model.

\subsection{Data-Free Quantization}
% Authors in \cite{nagel2019data} discuss the necessity to have data available to successfully design a quantization pipeline. They proposed a method that consists in balancing the weight ranges over the different layers of a model, using scale invariance properties that are specific to piece-wise affine (e.g. ReLU) activation functions, and relying on a traditional, naive quantization operator \cite{krishnamoorthi2018quantizing}. 
% In the current state of data-free quantization research, we see two major trends: methods that focus on the rounding operator itself \cite{squant2022,yvinec2022spiq} and methods that generate synthetic data \cite{li2021mixmix,choi2022s,zhong2022intraq}.
% With \ours, we propose to enable hardware flexibility for these methods by allowing to find better trade-offs in terms of accuracy and compression rate given a fixed bit-width.
Nagel et al. \cite{nagel2019data} discuss the necessity to have data available to successfully design a quantization pipeline. The proposed method consists in balancing the weight ranges over the different layers of a model, using scale invariance properties (similarly to \cite{stock2019equi}) that are specific to piece-wise affine (e.g. ReLU) activation functions, and relying on a traditional, naive quantization operator \cite{krishnamoorthi2018quantizing}. The authors note that the magnitude of the quantization error strongly varies with the DNN architecture: as such, already compact architectures such as MobileNets \cite{sandler2018MobileNetV2} appear as challenging for data-free quantization purposes (for instance, authors in \cite{krishnamoorthi2018quantizing} report a dramatic drop to chance-level accuracy without fine-tuning). Lin \textit{et al.} \cite{lin2016fixed} studied the properties of the noise induced by the quantization operator. These properties were later used in SQNR \cite{meller2019same}, a method that consists in assigning, for each layer, an optimal bit-width representation.
Overall, data-free approaches generally struggle to deal with low-bit representation problem, \textit{i.e.} performing quantization into bit widths lower than int4 (e.g. int3 or ternary quantization). However, the proposed method successfully addresses this challenge with the addition of the residual errors in higher order expansions. Furthermore, to compensate for the increased latency, we propose a budgeted expansion as well as a rewriting of the quantized expansion as ensembles for further parallelization.

\subsection{Flexibility in Quantization}
In practice, the existing data-free quantization methods only offer a single possible quantized model given a supported bit-width. Nevertheless, most pieces of hardware do not support a wide range of bit-width. For instance, Turing \cite{feng2021apnn} and Untether AI \cite{robinson_2022} architectures support int4 and int8 quantization while the Nvidia A100 \cite{nivdiaA100} supports int8, int4 and binary (int1) quantization. Conversely, \ours circumvents this limitation by offering several trade-offs given a bit-width representation. As discussed by \cite{zhang2022pokebnn}, hardware cost is for the most part derived from energy consumption. Consequently, quantizing models to supported bit width is a problem of paramount importance.

\subsection{Ensemble Methods}
Ensemble methods \cite{dietterich2000ensemble} are ubiquitous and widely studied models in the machine learning community, where weak, yet complimentary predictors are aggregated \textit{via} e.g. bagging \cite{breiman1996bagging}, boosting \cite{friedman2000additive} or gradient boosting \cite{breiman1997arcing}, to achieve superior performance. Leveraging deep learning and ensemble methods crossovers is still an overlooked subject, partly because standalone DNNs are usually quite robust on their own, and because DNNs already involve a high computational burden. Nevertheless, some methods \cite{rame2021dice,guo2015deep,tan2019evolving,arnaud2022thin} leveraged deep ensembles to great success for various applications. Of particular interest is the work of Zhu \textit{et al.} \cite{zhu2019binary} which consists in learning ensembles of binary neural networks (BNNs) to reach interesting accuracy vs. inference speed trade-offs, thanks to the potential weakness of predictors working with very low-bit representations such as BNNs, the potential complementarity between these, as well as the fact that, intrinsically, ensembles can be easily parallelized in practice. Our method offers several advantages over \cite{zhu2019binary}: First, \ours is applied to existing DNNs in a data-free manner, thus can be applied to accelerate already performing networks without bells and whistles. More importantly, accuracy of the BNN ensembles is significantly lower than that of the original full-precision model, while \ours achieves high acceleration without significant accuracy loss. In addition, results reported in \cite{zhu2019binary} are admittedly unstable as the ensembles grow, which the authors attribute to overfitting. \ours, however, is robust to such problems, as we demonstrate a convergence to the original accuracy with respect to the order of expansion.

\section{Methodology overview}

Let's consider $F$, a trained network with $L$ layers and trained weights $W_l$. Given a target integer representation in $b$ bits, e.g. int8 or int4, we consider a quantization operator $Q$. Formally, $Q$ maps $[\min\{W_l\}; \max\{W_l\}] \subset \mathbb{R}$ to the quantized interval $ [- 2^{b-1} ; 2^{b-1} -1] \cap \mathbb{Z}$. The most straightforward way to do so is to apply a scaling $s_l$ and round $\lfloor \cdot \rceil$ the scaled tensor, \textit{i.e.}:
\begin{equation}\label{eq:quantization_operator}
    Q(W_l) = \left\lfloor\frac{W_l}{s_{W_l}}\right\rceil
\end{equation}
With $s_l$ the quantization scale for $W_l$ computed as in \cite{krishnamoorthi2018quantizing} without loss of generality. Following the standard formulation \cite{gholami2021survey}, a quantization operator $Q$, comes with a de-quantization operator $Q^{-1}$. For the simple quantization operator $Q$ in Equation \eqref{eq:quantization_operator}, a natural choice is $Q^{-1}(Q(W_l)) = s_l \times Q(W_l)$. Note that, despite the notation, $Q^{-1}$ is not a true inverse, as by definition of the quantized space, there is some information loss. This loss, called the quantization error, is defined as: $W_l - Q^{-1}(Q(W_l))$.
In data-free quantization, we want to minimize this error in order to achieve the highest possible fidelity to the original model.
In the following section, we describe how we can efficiently reduce the quantization error for a fixed target bit-width $b$.

\begin{figure*}[!t]
    \centering
    \includegraphics[width = 0.85\linewidth]{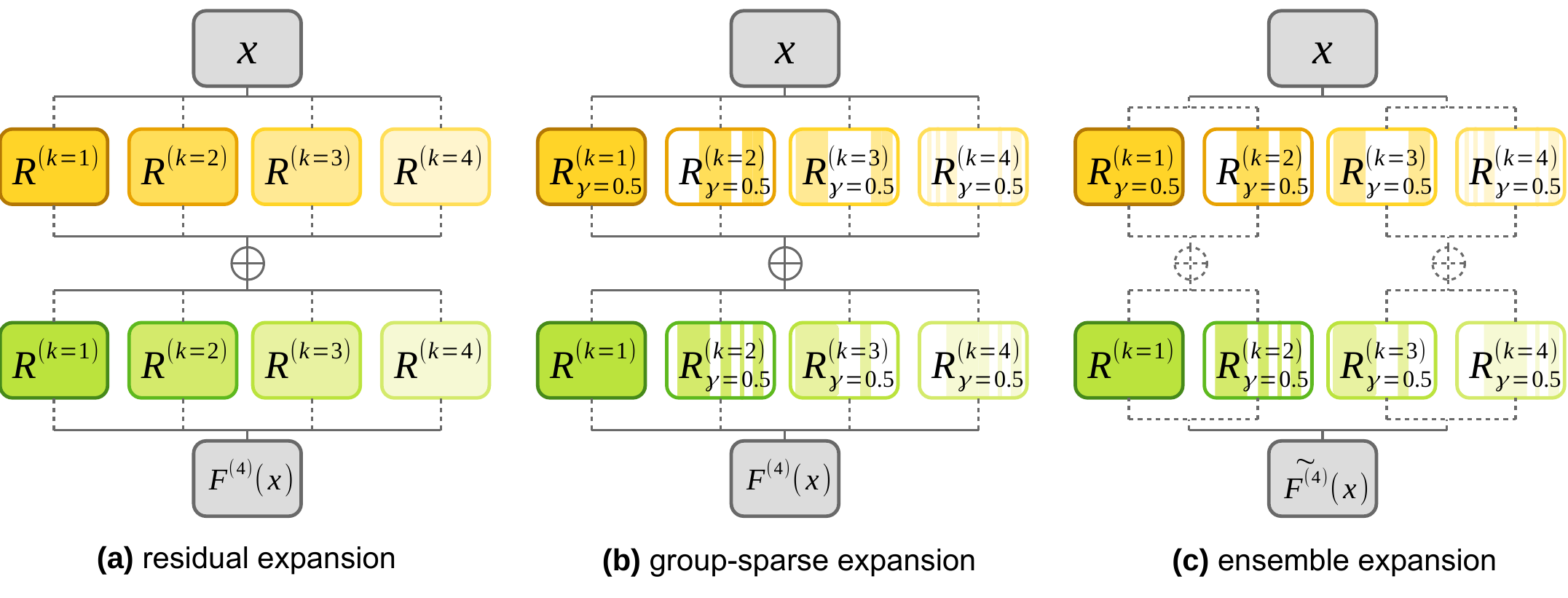}
    \caption{Illustration of the proposed method for a two-layers neural network. \textbf{(a)} residual expansion at order $4$: the intensity of the colormap indicates the magnitude of the residual error. \textbf{(b)} group-sparse expansion for orders $k \geq 1$ ($\gamma = 50\%$ sparsity). \textbf{(c)} ensemble expansion with two predictors. Dashed lines indicate parallel computations.}
    \label{fig:DRE_explain}
\end{figure*}

\subsection{Residual Expansion}
We propose to quantize the residual errors introduced by the quantization process. Although the proposed method can be applied to any tensor, let's consider a weight tensor $W$. In the full-precision space ($\mathbb{R}$), its first approximation is $R^1 = Q^{-1}(Q(W))$. To reduce the quantization error, we define $R^2$ as the quantized residual error
\begin{equation}\label{eq:residu_2_definition}
    R^{2} = Q^{-1}\left(Q\left(W - R^1 \right)\right)
\end{equation}
Consequently, during the quantized inference, we compute $R^1X + R^2X\approx WX$ which provides a finer approximation than the simple evaluation $R^1X$. For the sake of generality, we will not necessarily assume that all weights were expanded, \textit{i.e.} some weights may have been pruned like in \cite{yvinec2023rex}. The process can be generalized to any expansion order $K$. 
\begin{equation}\label{eq:residu_definition}
    R^{K} = Q^{-1}\left(Q\left(W - \sum_{k=1}^{K-1} R^k \right)\right)
\end{equation}
The resulting expanded layer is illustrated in Figure \ref{fig:DRE_explain} (a) in the case $K=4$.
Intuitively, an expansion $(R^1,...,R^K)$ provides the approximation $\sum_{k=1}^K R^k$ of $W$ and this approximation converges exponentially fast to the original full-precision weights with respect to $K$. As the support of the quantization error space is smaller than one quantization step, the error decreases by a factor larger than $2^b$ with each expansion term. Furthermore, as the quantization error decreases, it is expected that the prediction of the quantized model shall achieve a closer match to the original predictions. This is especially important in the context of data-free quantization, as we do not have the option to perform fine-tuning to recover accuracy. Worst, we also can not evaluate the degradation of the model on a calibration/validation set. Nonetheless, in \cite{yvinec2023rex} we provided an upper bound on the maximum error $\epsilon_{\max}$ introduced by residual quantization on the predictions, as 
\begin{equation}\label{eq:main_upper_bound}
    \epsilon_{\max} \leq U = \prod_{l=1}^L \left(\sum_{i=1}^l \left(\frac{1}{2^{b-1}-1}\right)^{K-1}\frac{s_{R^{i}}}{2} + 1 \right) - 1
\end{equation}
where $s_i$ is the scaling factor from equation \ref{eq:quantization_operator} applied to each residue. This implies that, in practice and regardless of the quantization operator, a network can be quantized with high fidelity with only a few expansion orders to fit a given bit-width. 

In its most general expression, the residual expansion can be defined with a pruning mask $M^k$ for each residual. The pruning mask can be applied in either a structured or unstructured fashion, depending on the desired outcome. For instance, we showed that it is very effective to handle LLMs outliers \cite{yvinec2023rex}. 
\begin{equation}
    R^{K} = M^K Q^{-1}\left(Q\left(W - \sum_{k=1}^{K-1} R^k \right)\right)
\end{equation}
In the case of unstructured and structured pruning, $M^K$ is a sparse tensor or tensor was row-wise zeros, respectively. In the remainder of this study, we will assume a structured mask (simpler to leverage) unless stated otherwise. In practice, the pruning ratio is given by a sparsity parameter $\gamma$ in order to match a total budget of operations with $K$. This sparse expansion is guaranteed to converge faster than the standard one to the original weight values. 

In the resulting expanded quantized network, the inference of the residual weights $R^{k}$ can be performed in parallel for each layer. In \ours, we enable to perform the inference of residual weights across different layers in parallel through ensembling.

\subsection{Ensembling from Expansion}

\begin{figure}[!t]
    \centering
    \includegraphics[width = 0.7\linewidth]{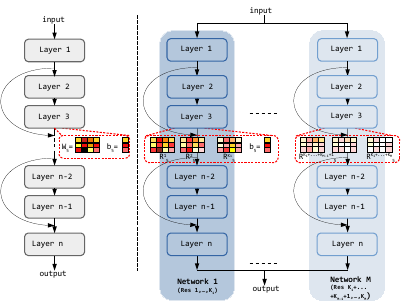}
    \caption{Example of ensemble expansion with groupings [$K_1$, $\dots$, $K_M$] for a residual network. The original network is broken down into $M$ networks, each having exactly the same architecture as the original network, except that (a) the weights for a layer $l$ of network $m \in \{1,\dots, M\}$ correspond to the residual expansion terms $R^{K_1+\dots+K_{m-1} + 1}_l \dots R^{K_1+\dots+K_m}_l$, and (b) the bias terms for all layers appear only in Network 1. At inference time, the input is fed to all the $M$ networks, resulting in efficient parallelization at virtually no cost in term of accuracy, depending on the expansion term clustering.}
    \label{fig:ensapprox}
\end{figure}

So far (see Figure \ref{fig:DRE_explain} (a-b)) each layer computes the $R^k \times X$ for all orders $k$. As previously stated, this formulation allows finding better trade-offs depending on hardware capacities. However, it does not fully exploit the potential parallelization capacities of the hardware, as the results from all these expansion orders have to be summed before applying the non-linearity. Intuitively, a better way to leverage parallelization would be to only sum the results after the last layer (Figure \ref{fig:DRE_explain} (c)), akin to an ensemble model where the elements in the ensemble corresponds to (a clustering of) the different expansion orders $k$.

To do so, we exploit two assumptions. First, the activation functions $\sigma$ of the model satisfy the following:
\begin{equation}\label{ensemble_approx}
\sigma(\cdot+\epsilon) \approx \sigma(\cdot) + \sigma(\epsilon)
\end{equation}
When $|\epsilon| \rightarrow 0$. This holds true for most popular activation functions such as ReLU, SiLU, GeLU or sigmoid. Second, the exponential convergence of the expansion terms (Equation \ref{eq:main_upper_bound}) ensures that the first expansion orders are preponderant w.r.t. the subsequent ones. With this in mind, we can group the expansion orders in $M$ clusters that each contain $K_m$ successive expansion orders, with $m \in [|1,M|]$ and [$K_1$, $\dots$, $K_M$] ($K_1+\dots+K_M = K$) the total number of orders in the expansion. For each of these clusters, the sum of the expansion orders that are contained inside must have negligible dynamics with regard to the previous cluster (see \ref{sec:appendix_ensembling} on how to empirically group expansion orders) to successively apply the approximation in Equation \ref{ensemble_approx}. Finally, we can define the $M$ quantized networks of the ensemble as having the same architecture as the original model $F$, except that the biases of the model are all assigned to $F_1$. For each $m = 1,\dots,M$, the weights of the $m^{\text{th}}$ predictor corresponds to the sum over the residuals at orders belonging to the  $m^{\text{th}}$ cluster. This ensemble approximation is illustrated on Figure \ref{fig:ensapprox}.

Proof of this approximation can be found in \ref{appendix:ensembling}. This ensemble approximation (Figure \ref{fig:DRE_explain} (c)) also comes with strong theoretical guarantees (see \ref{appendix:upperbound}) on accuracy preservation depending on expansion order grouping. Furthermore, it allows better usage of the potential parallelization capacities of a target hardware (per-model instead of per-layer parallelization), as will be shown in the upcoming experiments.

\section{Quantization Experiments}

In the following sections, we first go through the implementation requirements and efficient strategies to fully leverage the proposed expansions. Second, we perform a comparison of each expansion method in order to show the flexibility of \ours with respect to the bit-width. Third, we compare \ours to other quantization schemes under the constraint of equal bit operations as well as under the assumption of heavy parallelization for ensembling. Finally, we validate for each expansion their respective upper bound on the maximum error with respect to the original predictions.

\subsection{Implementation Details and Benchmarks}

We ran our tests on 6 different backbones, including ConvNets and transformers, and 4 tasks from both computer vision and natural language processing. We used ImageNet \cite{imagenet_cvpr09}, Pascal VOC 2012 \cite{pascal-voc-2012}, CityScapes dataset \cite{cordts2016cityscapes} and GLUE \cite{wang-etal-2018-glue}.

Unless stated otherwise, we apply symmetric, static, per-channel quantization as defined in \cite{gholami2021survey} and perform batch-normalization folding prior to any processing using the optimal method by \cite{yvinec2022fold}. In order to leverage the existing efficient implementations of the convolutional layers and fully-connected layers in CUDA, we propose to implement the expanded layer using a single kernel rather than $K$ kernels. This is achieved by concatenating the kernels along the output dimension. Consequently, the challenge of efficiently splitting the computations to fully leverage the target device computational power is left to the inference engine. In practice, this results in both better performance and less work, in order to adapt the method to existing engines such as OpenVino \cite{openvino} and TensorRT \cite{tensorrt}. Furthermore, the sparse expansion does not use sparse matrix multiplications as sparsity is applied to the neurons (or channels for ConvNets). The libraries, pre-trained model checkpoints and datasets information, are detailed in \ref{appendix:implem}. We evaluate the pre-processing time required by \ours and compare it to other generic data-free quantization methods in \ref{appendix:preprocessingtime}. In the following section, we confirm the hinted benefits from each of the proposed expansions.

\subsection{Flexible Quantization}
\begin{figure}[!t]
    \centering
    \includegraphics[width = 0.7\linewidth]{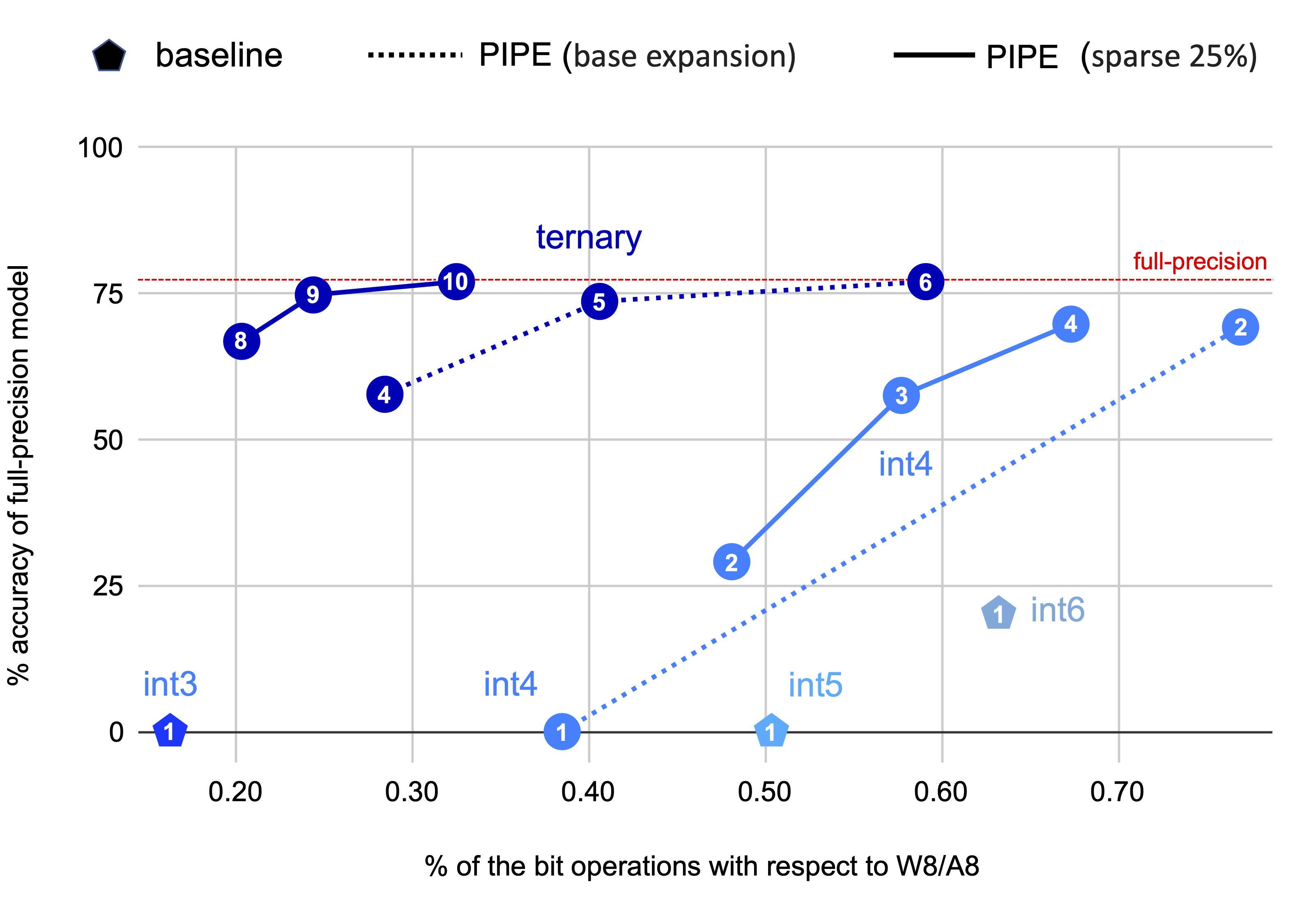}
    \caption{Accuracy \textit{vs.} inference time, for EfficientNet B0. The higher (accuracy) and the further to the left (inference cost) the better. The pentagons show the baseline results with W3/A3, W4/A4, W5/A5 and W6/A6 quantization. The dashed lines show the trade-offs in performance of \ours in W4/A4 and ternary quantization. Finally, the plain lines show \ours (with sparsity) also in W4/A4 and ternary quantization. The numbers in the symbols stands for the expansion order. \ours, and \textit{a fortiori} the sparse version, enables better trade-offs.}
    \label{fig:ablation}
\end{figure}

Figure \ref{fig:ablation} shows different trade-offs enabled by \ours on different bit-widths for an EfficientNet-B0 on ImageNet. First, the baseline quantization with the baseline quantization operator from \cite{krishnamoorthi2018quantizing} (as depicted by the pentagon of different colors-one for each bit width) offers no trade-off possibility given a specific bit-width and usually performs poorly below int8 quantization (e.g. barely reaching $20.290\%$ top1 accuracy in W6/A6 quantization). \ours, however, in the same setup, enables finding several trade-offs for each specific bit-width (e.g. int4 and ternary on Figure \ref{fig:ablation}) and supporting hardware. Furthermore, the sparse expansion enables finding more potential trade-offs (by varying the budget and expansion order) for every bit-width. Those trade-offs are generally more interesting than comparable ones obtained using the base expansion.

We do not require the use of extremely sparse residues in order to get the best accuracy. For instance, in Figure \ref{fig:ablation} we reach full-precision accuracy using 25\% sparse residues. In other words, the process converges fast with respect to the sparsity rates. All in all, these results demonstrate the flexibility of \ours to find good accuracy \textit{v.s.} speed trade-offs, given a budget of total bit operations (BOPs) to fit. In the following section, we evaluate the ability of \ours to outperform existing quantization methods in terms of equal bops as well as in the context of heavy parallelization.

\subsection{Main Results}\label{compsota}
\subsubsection{Equal BOPs}
\begin{table}[!t]
\caption{Comparison at equal BOPs (\textit{i.e.} no-parallelization) with existing methods in W6/A6 and \ours with W4/A6 +50\% of one 4 bits residue. In all tested configurations, distributing the computations between the residuals in a lower bit format enables to find superior trade-offs.}
\label{tab:compar_sota_equalbits}
\centering
\setlength\tabcolsep{3pt}
        % \vspace*{-0.5\baselineskip}
        \begin{tabular}{c|c|c|c|c}
         \hline
         DNN & method & year & bits & Accuracy \\
         \hline
         \multirow{8}{*}{ResNet 50} & \multicolumn{3}{c|}{full-precision} & 76.15 \\
         \cline{2-5}
         & \hyperlink{cite.nagel2019data}{DFQ} & ICCV '19 & W6/A6 & 71.36 \\
         & \hyperlink{cite.cai2020zeroq}{ZeroQ} & CVPR '20 & W6/A6 & 72.93\\
         & \hyperlink{cite.zhang2021diversifying}{DSG} & CVPR '21 & W6/A6 & 74.07 \\
         & \hyperlink{cite.xu2020generative}{GDFQ} & ECCV '20 & W6/A6 & 74.59 \\
         & \hyperlink{cite.squant2022}{SQuant} & ICLR '22 & W6/A6 & 75.95 \\
         & \hyperlink{cite.yvinec2022spiq}{SPIQ} & WACV '23 & W6/A6 & 75.98 \\
         & \ours & - & 150\% $\times$ W4/A6 & \textbf{76.01} \\
         \hline
         \multirow{5}{*}{MobNet v2} & \multicolumn{3}{c|}{full-precision} & 71.80 \\
         \cline{2-5}
        & \hyperlink{cite.nagel2019data}{DFQ} & ICCV '19 & W6/A6 & 45.84 \\
        & \hyperlink{cite.squant2022}{SQuant} & ICLR '22 & W6/A6 & 61.87 \\
        & \hyperlink{cite.yvinec2022spiq}{SPIQ} & WACV '23 & W6/A6 & 63.24 \\
        & \ours & - & 150\% $\times$ W4/A6 & \textbf{64.20} \\
         \hline
         \multirow{4}{*}{EffNet B0} & \multicolumn{3}{c|}{full-precision} & 77.10 \\
         \cline{2-5}
        & \hyperlink{cite.nagel2019data}{DFQ} & ICCV '19 & W6/A6 & 43.08 \\
        & \hyperlink{cite.squant2022}{SQuant} & ICLR '22 & W6/A6 & 54.51 \\
        & \ours & - & 150\% $\times$ W4/A6 & \textbf{57.63} \\
         \hline
        \end{tabular}
\end{table}
In order to highlight the benefits of residual quantization errors expansions as a stand-alone improvement upon existing methods with equal BOPs, we compare \ours using the naive quantization operator from \cite{krishnamoorthi2018quantizing} on a variety of reference benchmarks. First, in Table \ref{tab:compar_sota_equalbits}, we report the performance on three different computer vision networks between state-of-the-art methods in W6/A6 quantization (other setups are discussed in \ref{appendix:more_results}) and \ours using a sparse expansion at order $K=2$ using $50\%$ of a 4 bits representations in order to get a similar total number of bit operations (150\% of 4 bits $\approx$ 6 bits). For all networks, \ours significantly outperform recent state-of-the-art data-free quantization methods at equal BOPs. Furthermore, we confirm these results on object detection and image segmentation in \ref{appendix:more_results}.

\begin{table*}[!t]
\caption{GLUE task quantized in W4/A8. We consider the BERT transformer architecture \cite{devlin2018bert} and provide the original performance (from the article) of BERT on GLUE as well as our reproduced results (reproduced). \ours is applied to the weights with 3 bits + 33\% sparse expansion.}
\label{tab:comparison_sota_bert}
\centering
\setlength\tabcolsep{4pt}
\begin{subtable}{.44\textwidth}
% \centering
\raggedleft
% \hspace{3cm}
\begin{tabular}{|c|c|c|}
\hline
task & original & reproduced \\
\hline
CoLA & 49.23 & 47.90 \\
SST-2 & 91.97 & 92.32 \\
MRPC & 89.47 & 89.32 \\
STS-B & 83.95 & 84.01 \\
QQP & 88.40 & 90.77 \\
MNLI & 80.61 & 80.54 \\
QNLI & 87.46 & 91.47 \\
RTE & 61.73 & 61.82 \\
WNLI & 45.07 & 43.76 \\
\hline
\end{tabular}
\end{subtable}
\begin{subtable}{.55\textwidth}
% \hfill
\raggedright
% \centering
\begin{tabular}{|c|c|c|c|c|c|}
\hline
\hyperlink{cite.krishnamoorthi2018quantizing}{uniform} & \hyperlink{cite.zhou2017incremental}{log} & \hyperlink{cite.squant2022}{SQuant} & \hyperlink{cite.yvinec2022spiq}{SPIQ} & \ours \\
\hline
45.60 & 45.67 & \underline{46.88} & 46.23 & \textbf{47.02} \\
\underline{91.81} & 91.53 & 91.09 & 91.01 & \textbf{91.88} \\
88.24 & 86.54 & \textbf{88.78} & 88.78 &  \underline{88.71}\\
\underline{83.89} & {84.01} & 83.80 & 83.49 & \textbf{83.92} \\
89.56 & 90.30 & \underline{90.34} & 90.30 & \textbf{90.50} \\
\underline{78.96} & 78.96 & 78.35 & 78.52 & \textbf{79.03} \\
89.36 & 89.52 & \textbf{90.08} & \underline{89.64} & \textbf{90.08} \\
\underline{60.96} & 60.46 & 60.21 & 60.21 & \textbf{61.20} \\
39.06 & 42.19 & \underline{42.56} & 42.12 & \textbf{42.63} \\
\hline
\end{tabular}

\end{subtable}
\end{table*}
Second, In Table \ref{tab:comparison_sota_bert}, we perform a similar experiment on NLP using Bert \cite{devlin2018bert}. Similarly to our results on ConvNets, \ours can find better accuracy per bit trade-offs as compared to the fours references, including non-uniform quantization \cite{miyashita2016convolutional}. Furthermore, if we consider parallelization, \ours can offer even higher accuracy results using ensemble approximation, as shown in what follows.

\subsubsection{Leveraging Parallelization}

\begin{figure}[!t]
    \centering
    \includegraphics[width = 0.8\linewidth]{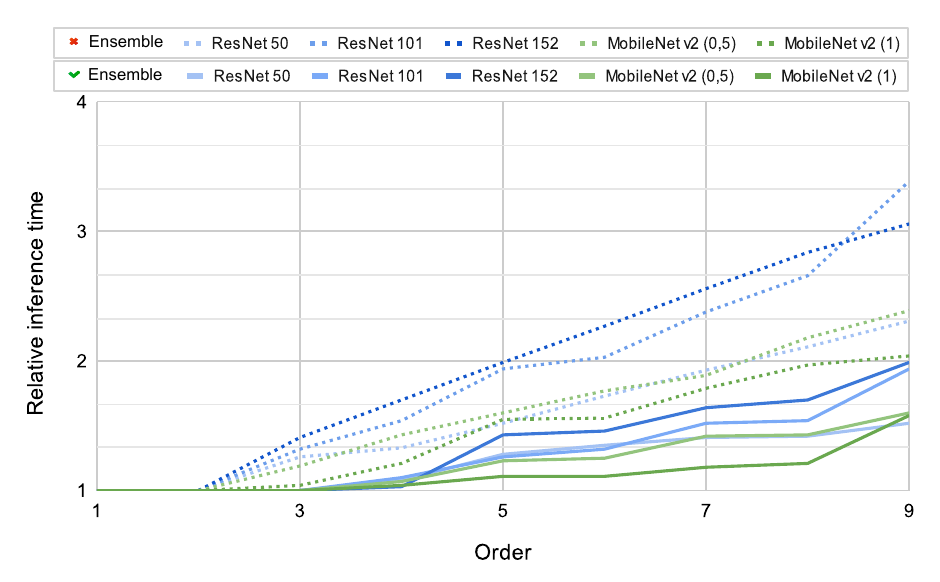}
    \caption{Standardized inference time on ImageNet of different architectures. We demonstrate that parallelization of the overhead computations brought by the proposed ensemble approximation drastically reduces their impact on runtime on an intel m3 CPU.}
    \label{fig:inference}
\end{figure}

On large devices using CPUs or GPUs for inference, parallelization of the computations within a layer or a model can drastically reduce the runtime. In Figure \ref{fig:inference}, we showcase the normalized inference times (\textit{i.e.} the ratio between the runtime of the expanded networks and the baseline quantized model) for several ResNets and MobileNets on ImageNet.
On the one hand, as indicated by the dashed plots (no ensembling), the relative inference time grows sub-linearly  for each network, e.g. order 2 comes at virtually no cost in terms of inference time, while using higher orders may induce a noticeable slow-down: $<50\%$ speed reduction for order $K=3$, and about $100\%$ at order $K=5$. On the other hand, when we evaluate ensembles (plain lines), and specifically two predictors with similar sizes (see \ref{sec:appendix_ensembling}), we observe that we can expand the quantized models up to order $4$ without noticeably degrading the inference speed even a small CPU such as the intel m3. This is due to the more efficient parallelization using the proposed ensemble approximation.

Consequently, in Table \ref{tab:compar_sota} we compare \ours to other data-free quantization methods under this assumption of heavy parallelization (and asymmetric quantization). We consider an ensemble of two weak predictors with each 2 expansion orders: $R^1,R^2$ for the first predictor and $R^3,R^4$ for the second. Our results on the challenging MobileNet show that, as compared to the most recent data-free quantization operators that do not use data-generation (no-DG) such as SQuant \cite{squant2022} and SPIQ \cite{yvinec2022spiq}, the proposed \ours method improves the accuracy by 16.35 points in 4 bits quantization. Furthermore, data-free techniques that leverage synthetic data are usually known for their higher performance as compared to the methods that only focus on the quantization operator. Nonetheless, \ours still manages to improve upon recent techniques including IntraQ \cite{zhong2022intraq} and AIT \cite{choi2022s} by 5.26 points in the advantageous and realistic assumption of heavy parallelized inference. These observations can be generalized to more DNN architectures, as discussed in \ref{appendix:parallelization_extra}.
We therefore conclude that \ours allows to find better accuracy \textit{v.s.} speed trade-offs given the possibility to leverage parallelization on larger hardware.

\begin{table}[!t]
\caption{Accuracy for MobileNet V2 on ImageNet with 8 bits activations. \ours uses ensembling of two weak predictors each sharing the same number of bit operations and similar runtime, based on Figure \ref{fig:inference}.}
\label{tab:compar_sota}
\centering
\setlength\tabcolsep{4pt}

      \centering
        \begin{tabular}{c|c|c|c|c}
        \hline
            method & year & no-DG & bits & accuracy\\
        \hline
        \multicolumn{4}{c|}{full-precision} & 71.80\\
        \hline 
        \hyperlink{cite.zhao2019improving}{OCS} & ICML 2019 & \cmark & W8/A8 & 71.10 \\
        \hyperlink{cite.nagel2019data}{DFQ}	& ICCV 2019 & \cmark & W8/A8 & 70.92 \\
        \hyperlink{cite.meller2019same}{SQNR} & ICML 2019 & \cmark & W8/A8 & 71.2 \\
        \hyperlink{cite.cai2020zeroq}{ZeroQ} & CVPR 2020 & \xmark & W8/A8 & 71.61 \\
        \hyperlink{cite.yvinec2022spiq}{SPIQ} & WACV 2023 & \cmark & W8/A8 &  71.79\\
        \hyperlink{cite.xu2020generative}{GDFQ} & ECCV 2020 & \xmark & W8/A8 & \textbf{71.80} \\
        \ours & - & \cmark & W8/A8 & \textbf{71.80} \\
        \hline
        \hyperlink{cite.nagel2019data}{DFQ} & ICCV 2019 & \cmark & W4/A8 & 0.10\\
        \hyperlink{cite.cai2020zeroq}{ZeroQ} & CVPR 2020 & \xmark & W4/A8 & 49.83\\ % source MixMix
        \hyperlink{cite.xu2020generative}{GDFQ} & ECCV 2020 & \xmark & W4/A8 & 51.30 \\
        \hyperlink{cite.squant2022}{SQuant} & ICLR 2022 & \cmark & W4/A8 & 55.38\\
        \hyperlink{cite.li2021mixmix}{MixMix} & CVPR 2021 & \xmark & W4/A8 & 65.38\\
        \hyperlink{cite.choi2022s}{AIT} & CVPR 2022 & \xmark & W4/A8 & 66.47 \\
        \hyperlink{cite.zhong2022intraq}{IntraQ} & CVPR 2022 & \xmark & W4/A8 & 65.10\\
        \ours & - & \cmark & W4/A8 & \textbf{71.73} \\
        \hline
        \end{tabular}
\end{table}

\subsection{Empirical Validation of the Theoretical Bounds}

\begin{table}[!t]
\caption{Upper bound $U$ (see theorem \ref{thm:dre_upperbound}, \ref{thm:slim_upperbound} and \ref{thm:ensemble}) over the maximum error as compared to the corresponding empirical measurement $U_{\text{empirical}}$ of that error for a VGG 16 \cite{simonyan2014very} trained on ImageNet. The closer the upper bound $U$ to the value $U_{\text{empirical}}$ the better.}
\label{tab:upper_bound}
\centering
\setlength\tabcolsep{2pt}
\begin{tabular}{c|c|c|c|c|c}
\hline
bits & K & sparsity & ensemble & $U$ & $U_{\text{empirical}}$\\
\hline
8 & 1 & \xmark & \xmark & 0.12 & 0.05\\
8 & 4 & \xmark & \xmark & 1.99 $\times 10^{-7}$ & 1.78 $\times 10^{-7}$ \\
8 & 2 & 50\% & \xmark & 0.06 & 0.05 \\
8 & 4 & 50\% & \xmark & 1.17 $\times 10^{-7}$ & 0.65 $\times 10^{-7}$ \\
8 & 2 & \xmark & \cmark & 0.09 & 0.02 \\
8 & 4 & \xmark & \cmark & 0.47 $\times 10^{-4}$ & 0.43 $\times 10^{-4}$\\
\hline
\end{tabular}
\end{table}

In Table \ref{tab:upper_bound}, we validate the proposed upper bound $U$ on the maximum error on the predictions (see Equation \ref{eq:main_upper_bound}) on a VGG-16 \cite{simonyan2014very} trained on ImageNet. The tightness of the provided theoretical results can be estimated from the gap between our estimation and the empirical maximum error $U_{\text{empirical}}$ from quantization on the predictions, which is measured as the infinite norm between the full-precision and quantized logits. We observe that a naïve 8-bits quantization (\textit{i.e.} no expansion) leads to an upper bound $U = 0.12$, while we observe $U_{\text{empirical}}=0.05$. Compare with the norms of the logits, which in this case is equal to $0.3423$: as such, the proposed upper bound appears as relatively tight and significantly lower than the logits magnitude. In such a case, due to overconfidence, the error shall not, in theory, affect the classification. The proposed upper bound is even tighter for larger values of $K$, and becomes lower and lower (for both the theoretical and corresponding empirical maximum errors) when introducing sparsity. Last but not least, we see on the last two rows in Table \ref{tab:upper_bound} that $U$ stays tight when using the ensemble approximation. This further demonstrates the good properties of the proposed expansion, sparse expansion and ensemble approximation in \ours in addition to the relevance of its theoretical guarantees, which are critical in data-free quantization.

\subsection{Flexibility with respect to the Quantization Operator}
\begin{table}[!t]
    \centering
    \setlength\tabcolsep{2pt}
    \caption{We report the different trade-offs achieved with \ours expanding over different proposed quantization operators in W4/A4 as compared to their performance in W8/A8, on a MobileNet V2.}
    \begin{tabular}{c|c|c|c|c|c|c}
\hline
    method & W4 & $\text{W4}_{\text{+ 25\%}}$ & $\text{W4}_{\text{+ 50\%}}$ & $\text{W4}_{\text{+ 75\%}}$ & W6 &  W8 \\
\hline
\hline
        \hyperlink{cite.krishnamoorthi2018quantizing}{naive} & 0.1 & 53.11 & 64.20 & \textbf{71.61} & 51.47 & 70.92 \\
        \hyperlink{cite.squant2022}{SQuant} & 4.23 & 58.64 & 67.43 & \textbf{71.74} & 60.19 & 71.68 \\
        \hyperlink{cite.yvinec2022spiq}{SPIQ} & 5.81 & 59.37 & 68.82 & \textbf{71.79} & 63.24 & \textbf{71.79} \\
        \hyperlink{cite.nagel2020up}{AdaRound} & 6.17 & 60.30 & 69.80 & \textbf{71.77} & 68.71 & \textbf{71.75} \\
        \hyperlink{cite.li2021brecq}{BrecQ} & 66.57 & 70.94 & 71.28 & \textbf{71.76} & 70.45 & \textbf{71.76} \\
\hline
    \end{tabular}
    \label{tab:flexibility_operator}
\end{table}

Most recent approaches for data-free quantization focus on designing better quantization operators. Interestingly, our approach is agnostic to the choice of the quantization operator and can thus be combined with these approaches without bells and whistles. In Table \ref{tab:flexibility_operator}, we report the possible trade-offs achievable with \ours combined with recent approaches focusing on the quantization operator, on MobileNet V2. The different trade-offs are sorted in ascending order in terms of added overhead operations, e.g. $\text{W4}_{\text{+ 25\%}}$ leads to fewer operations than $\text{W4}_{\text{+ 50\%}}$.
First, when used with SQuant \cite{squant2022}, \ours achieves full-precision accuracy in W4/A4 with only $75\%$ overhead, even outperforming W8/A8 quantization. SPIQ \cite{yvinec2022spiq}, can also be adapted with \ours in order to achieve good accuracy using only 4 bits representation as it benefits from finer weight quantization. This explains the slightly higher accuracies than SQuant using 25\% and 50\% sparsity. Finally, with AdaRound \cite{nagel2020up} and BrecQ \cite{li2021brecq}, two PTQ techniques, we observe similar results as expected.
In particular, BrecQ which already achieves decent accuracy in W4/A4 with a $5.23$ points accuracy drop gets closer to the original accuracy ($0.86$ point accuracy drop) using a quarter of the expansion.
In such a case, if the target hardware only has support for W4 and W6, \ours shall allow using W4 quantization with a small overhead due to the addition of the sparse expansion term (which can be parallelized using the proposed ensemble approximation if the hardware supports it), whereas most methods would stick with W6 quantization to preserve the accuracy of the model. Thus, we believe that those results illustrate the adaptability of the proposed framework. 

\section{Conclusion}

In this work, we proposed a novel ensemble approximation for the residual expansion of the quantization error to design a novel flexible data-free quantization method. In order to find the best accuracy \textit{v.s.} speed trade-offs, we proposed to consider the residuals of the quantization error, along with a group-sparse formulation. Furthermore, we showed that, under certain assumptions, the terms in the residual expansion can be grouped together at the whole network level, resulting in an ensemble of (grouped together) residual expansion networks. This ensemble approximation allows to efficiently leverage the parallelization capacities of the hardware, if any.

The proposed method, dubbed \ours, is flexible with respect to both hardware bit-width support and parallelization capacities. In addition, PIPE is backed up with strong theoretical guarantees which, critical in the context of data-free quantization (where one can not empirically estimate the accuracy degradation), as we provide a tight upper bound on the error caused by the residual quantization and ensemble approximations.

Through extensive experimental validation, we showed the benefits of the proposed approach. As such, PIPE significantly outperforms recent data-free quantization methods on a wide range of ConvNet architectures applied to ImageNet classification, Pascal VOC object detection, CityScapes semantic segmentation as well as transformers on GLUE text classification. Furthermore, as showed in previous work \cite{yvinec2023rex}, the proposed framework also constitutes an elegant way of dealing with outlying values in LLMs: as such, it appears as an ideal choice for designing flexible quantization approaches to further reduce the memory footprint and latency of DNNs. This is of paramount importance since nowadays models become more and more parameter and computation hungry. Last but not least, the ideas presented in this paper are orthogonal to most recent approaches focusing on improving the quantization operator, and hence can straightforwardly be combined with those approaches. 

\subsection{Limitations:} The residual expansion method introduced in this paper does not adapt to the inter-layer importance and runtime cost discrepancies. An interesting future work would thus consist in applying more expansion orders on the most important layers w.r.t. the model accuracy, as well as using fewer orders for the most computationally expensive layers.

\bibliographystyle{elsarticle-num} 
\bibliography{egbib}

\newpage
\appendix
\onecolumn

\tableofcontents

\section{How to set predictors size}\label{sec:appendix_ensembling}

\begin{figure}[!t]
    \centering
    \includegraphics[width = 0.7\linewidth]{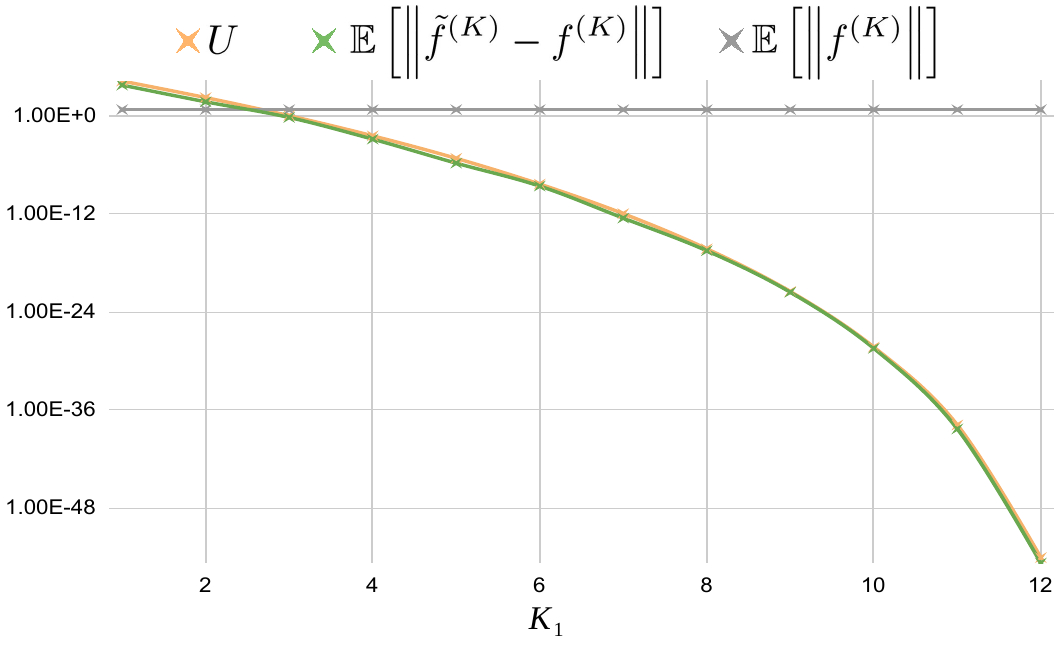}
    \caption{Comparison between the expected empirical error from ensembling $\mathbb{E}[\|f^{(K)}-\tilde f^{(K)}\|]$ (green) and its upper bound $U$ (Lemma \ref{thm:ensemble}, orange) for different values of $K_1$ on a ResNet 50 trained on ImageNet and quantized with ternary values and $K=13$, $\gamma = 25\%$. We also plot the reference $\mathbb{E}[\|f^{(K)}\|]$ (grey).}
    \label{fig:U}
\end{figure}
\begin{figure}[!t]
    \centering
    \includegraphics[width = 0.7\linewidth]{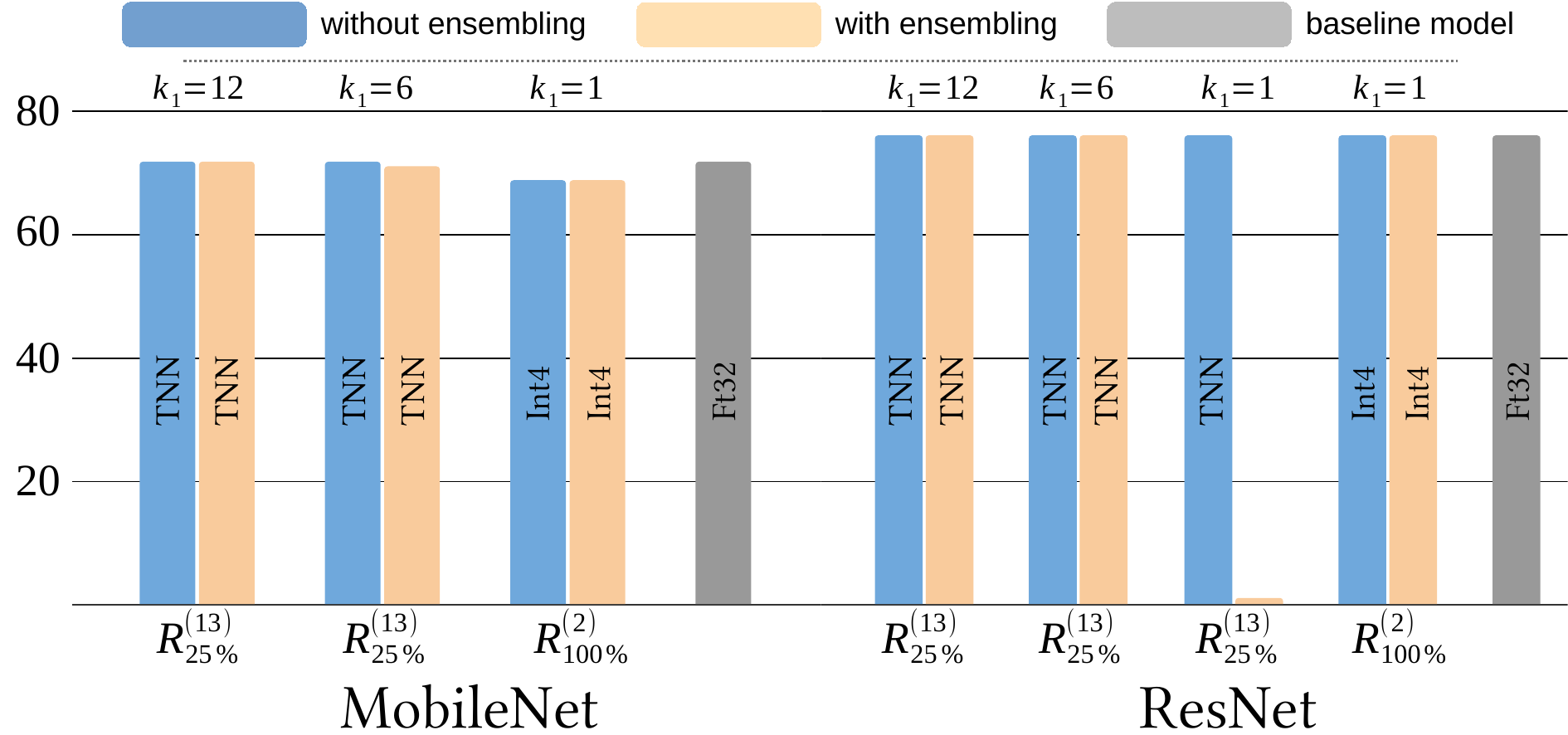}
    \caption{Comparison between ensemble expansion $\tilde f^{(K)}$ (in orange) and regular expansion $f^{(K)}$ (blue) on ImageNet. We test different bit representations, namely ternary (TNN) and int4 as well as different values for $K_1$. Except for very low values of the ratio $K_1/K$, we observe the robustness of the ensembling method.}
    \label{fig:ensembling_2}
\end{figure}
We consider quantized expansions of networks with $M$ predictors such that $\Sigma_{m=1}^M K_m = K$ ($K_1$ is the number of orders in the first predictor of the ensemble) and $\gamma$ the sparsity factor. The larger $K_1$, the lower the difference between the ensemble $\tilde f^{(K)}$ and the developed network $f^{(K)}$. Conversely, the more balanced the elements of the ensemble, the more runtime-efficient the ensemble: thus, $K_1$ have to be fixed carefully so that the ensemble shall be faster than the developed network, without accuracy loss. Fortunately, the accuracy behavior w.r.t. the value of $K_1$ can be estimated from the values of the upper bound $U$ (Lemma \ref{thm:ensemble}) on the expected error from ensembling $\mathbb{E}[\|f^{(K)}-\tilde f^{(K)}\|]$. As illustrated on Figure \ref{fig:U} in the case of ternary quantization, this upper bound is tight and collapses more than exponentially fast w.r.t. $K_1$. For instance, if $K_1 \leq 2$, $U$ is significantly larger than the amplitude of the logits $\mathbb{E}[\|f^{(K)}\|]$ and the accuracy is at risk of collapsing. When $U$ vanishes compared to $\mathbb{E}[\|f^{(K)}\|]$, the ensemble and regular expansions are guaranteed to be almost identical, and the accuracy is preserved. Thus, we can compare the upper bound $U$ and the empirical norm of the logits from the expansion $\mathbb{E}[\|f^{(K)}\|]$ to assess the validity of an ensemble. Plus, $\mathbb{E}[\|f^{(K)}\|]$ can be estimated using statistics from the last batch norm layers to allow for a fully data-free criterion.

\begin{figure}[!t]
    \centering
    \includegraphics[width = 0.6\linewidth]{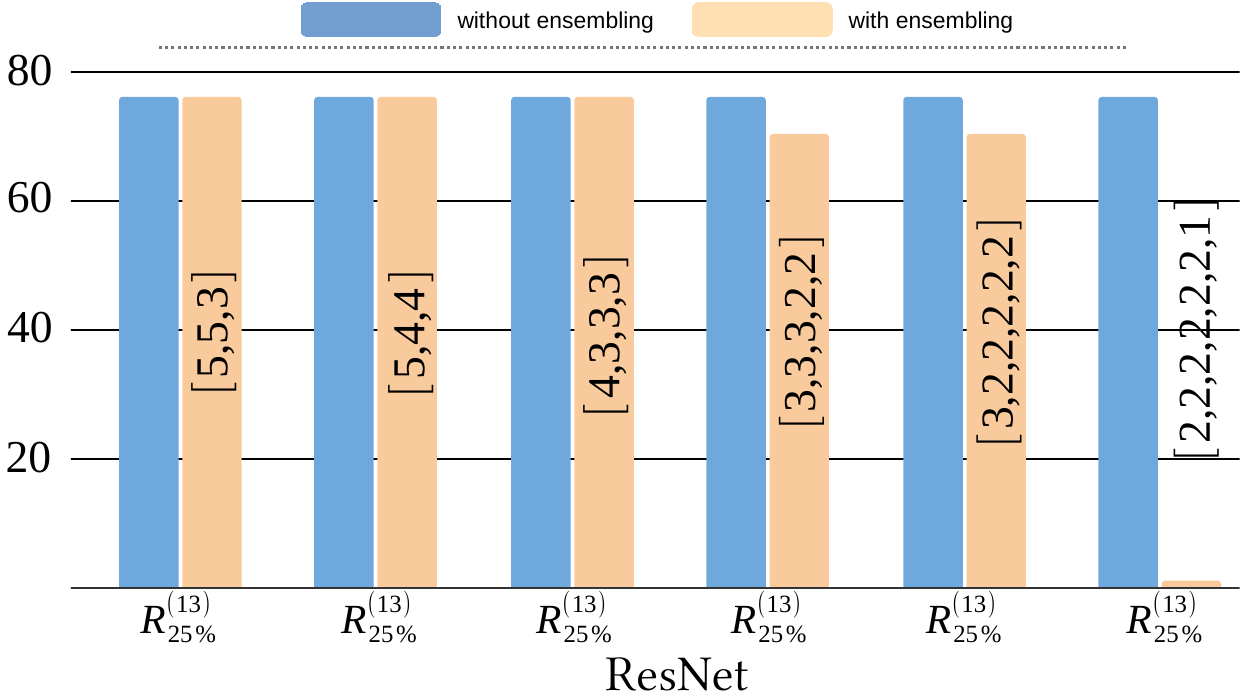}
    \caption{Comparison between TNN ensemble expansion $\tilde f^{(K)}$ (in orange) and regular expansion $f^{(K)}$ (blue) on ImageNet.}
    \label{fig:ensembling_3+}
\end{figure}

With this in mind, in Figure \ref{fig:ensembling_2} we compare the top-1 accuracies of $\tilde f^{(K)}$ and $f^{(K)}$ for different architectures (MobileNet v2 and ResNet 50) and quantization configurations. The ensemble expansion systematically matches the accuracy of the developed network in terms of accuracy, except in the case of ternary quantization, when $K_1 =1$. This is remarkable, as ensembling significantly decreases the inference time with a two predictors configuration.

Figure \ref{fig:ensembling_3+} shows the results obtained with larger ensembles of smaller quantized predictors, \textit{i.e.} with $M>2$.
We observe the full preservation of the accuracy of the developed network as long as $K_1 \geq 4$ and a loss of 6 points for balanced ensembles of $5-6$ predictors and $K_1=3$. Here again, with $M=7$ and $K_1=2$, the accuracy is very low, as predicted by \ref{fig:U}. To sum it up, ensembling developed networks allows to significantly decrease the inference time, with theoretical guarantees on the accuracy preservation.

\begin{table}[!t]
\caption{Comparison of the evaluation time in seconds of a ResNet 50 over the validation set of ImageNet using an expansion of order $k=8$ with ensembling of $m$ predictors $m\in\{1,2,3,4\}$. We distinguish the setups by reporting the values of $[K_1,...,K_m]$.}
\label{tab:run_time}
\centering
\setlength\tabcolsep{4pt}

\begin{tabular}{c|c|c|c|c||c}
\hline
 device & $[8]$ & $[4,4]$ & $[3,3,2]$ & $[2,2,2,2]$ & $[1]$ \\
 expansion & \cmark & \cmark & \cmark & \cmark & \xmark \\
 ensembling & \xmark & \cmark & \cmark & \cmark & \xmark \\
\hline\hline
Intel(R) i9-9900K   & 215 & 54 & 26 & 26 & 25 \\
Ryzen 3960X     & 122 & 30 & 23 & 22 & 22 \\
RTX 2070                     & 13 & 8 & 5 & 5 & 5 \\
RTX 3090                     & 11 & 7 & 4 & 4 & 4 \\
\hline
\end{tabular}
\end{table}

Finally, Table \ref{tab:run_time} shows the runtime of a ResNet 50 for a full evaluation on ImageNet validation set (50,000 images). We tested the models on different devices (CPU/GPU) using a fixed budget $\beta = 7$ and order $K = 8$, and compared ensembles expansions (with 2 $[4,4]$, 3 $[3,3,2]$ and 4 $[2,2,2,2]$ predictors). On each device, the ensembles are up to 10 times faster than the baseline expansion.

\section{Ensembling Protocol}\label{appendix:ensembling}
We recall some results from previous work \cite{yvinec2023rex}:
\begin{lemma}\label{thm:dre}
Let $f$ be a layer with weights $W \in \mathbb{R}^n$ with a symmetric distribution. 
We denote $R^{(k)}$ the $\text{k}^{\text{th}}$ quantized weight from the corresponding residual error.
Then the error between the rescaled $W^{(K)}=Q^{-1}(R^{(K)})$ and original weights $W$ decreases exponentially, \textit{i.e.}:
\begin{equation}\label{eq:exponential_decrease}
    \left| w - \sum_{k = 1}^{K} w^{(k)} \right| \leq \left(\frac{1}{2^{b-1}-1}\right)^{K-1} \frac{{\left(\lambda_{R^{(K)}}\right)}_i}{2}
\end{equation}
where $w$ and $w^{(k)}$ denote the elements of $W$ and $W^{(k)}$ and ${\left(\lambda_{R^{(k)}}\right)}_i$ denotes the row-wise rescaling factor at order $k$ corresponding to $w$, as defined in equation \ref{eq:quantization_operator}.
\end{lemma}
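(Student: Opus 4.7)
The plan is to proceed by induction on the expansion order $K$, combining the elementary nearest-rounding inequality satisfied by $Q \circ Q^{-1}$ with the recursive construction of the residuals in Eq.~\eqref{eq:residu_definition}.

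For the base case $K=1$, I would write $w^{(1)} = (\lambda_{R^{(1)}})_i \lfloor w / (\lambda_{R^{(1)}})_i \rceil$ from Eq.~\eqref{eq:quantization_operator} and apply the rounding bound $|x - \lfloor x \rceil| \leq 1/2$ to get $|w - w^{(1)}| \leq (\lambda_{R^{(1)}})_i / 2$, which matches the right-hand side since the prefactor $(1/(2^{b-1}-1))^{0}$ equals $1$.

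For the inductive step, introduce $e_{K-1} := w - \sum_{k=1}^{K-1} w^{(k)}$ and assume the bound holds at order $K-1$. At order $K$ the quantization operator acts on $e_{K-1}$ with row-wise scale $(\lambda_{R^{(K)}})_i$ equal to $\max_j |(e_{K-1})_{i,j}| / (2^{b-1}-1)$, so the nearest-rounding inequality again gives $|e_K| \leq (\lambda_{R^{(K)}})_i / 2$. The geometric prefactor is then recovered by substituting the induction hypothesis into this definition of the scale: because $|e_{K-1}|$ has already shrunk by $(1/(2^{b-1}-1))^{K-2}$ relative to the previous scale, the new scale inherits the shrinkage as $(\lambda_{R^{(K)}})_i \leq (\lambda_{R^{(K-1)}})_i / (2^{b-1}-1)$; telescoping this recursion and then refactoring $(\lambda_{R^{(K)}})_i/2$ from the right-hand side yields the claimed exponent $(1/(2^{b-1}-1))^{K-1}$.

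The main obstacle will be the per-row treatment of the maximum used to define $(\lambda_{R^{(K)}})_i$: the coordinate $j$ that realizes $\max_j |(e_{K-1})_{i,j}|$ may shift across orders, so promoting a pointwise induction hypothesis into a uniform control of the next row-wise maximum requires a careful supremum argument. This is precisely where the symmetric-distribution hypothesis on $W$ enters, since it guarantees that the quantization grid is centered, so that no fraction of the $b$-bit budget is lost to asymmetric clipping and the shrinkage factor $1/(2^{b-1}-1)$ is sharp at every order. A minor edge case worth flagging is the ternary regime $b=2$, for which $2^{b-1}-1=1$ and the geometric prefactor trivializes to $1$; the statement then reduces to the single-step rounding inequality, which remains valid by construction.
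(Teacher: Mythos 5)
You should first know that the paper contains no proof of this lemma: it is recalled verbatim from \cite{yvinec2023rex} in \ref{appendix:ensembling} and used as a black box, so your attempt can only be judged on its own terms. Your skeleton --- induction on $K$, the nearest-rounding inequality $|x-\lfloor x\rceil|\le 1/2$ for the base case, and a geometric shrinkage of the per-row scales for the inductive step --- is the natural route, and your recursion is sound: since $\max_j|(e_{K-1})_{i,j}|\le(\lambda_{R^{(K-1)}})_i/2$, one in fact gets the stronger $(\lambda_{R^{(K)}})_i \le (\lambda_{R^{(K-1)}})_i/\bigl(2(2^{b-1}-1)\bigr)$.

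The genuine gap is the last step, ``refactoring $(\lambda_{R^{(K)}})_i/2$ from the right-hand side.'' Telescoping your recursion and combining with $|e_K|\le(\lambda_{R^{(K)}})_i/2$ yields a bound proportional to $(\lambda_{R^{(1)}})_i$, not to $(\lambda_{R^{(K)}})_i$; converting the former into the latter would require a matching \emph{lower} bound on the order-$K$ scale, which you do not have and which fails in general. Worse, the inequality as literally stated cannot follow from any rounding argument: for $K\ge 2$ and $b\ge 3$ the claimed right-hand side $(1/(2^{b-1}-1))^{K-1}(\lambda_{R^{(K)}})_i/2$ is \emph{strictly smaller} than the one-step rounding bound $(\lambda_{R^{(K)}})_i/2$, yet a coordinate of $e_{K-1}$ that does not realize the row maximum can incur a rounding error arbitrarily close to $(\lambda_{R^{(K)}})_i/2$ (take $(e_{K-1})_{i,j}/(\lambda_{R^{(K)}})_i$ near a half-integer). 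So the lemma is provable exactly along your lines only if the scale on the right-hand side is read as $(\lambda_{R^{(1)}})_i$ (or the scale of $W$ itself), which is almost certainly the intended reading of the recalled statement; with $(\lambda_{R^{(K)}})_i$ it is not. Finally, the symmetry hypothesis does not resolve the per-row supremum issue you flag --- it only guarantees a symmetric grid so that the full range $[-(2^{b-1}-1),2^{b-1}-1]$ is usable --- so that part of your plan cannot be leaned on to close the gap.
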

\begin{lemma}\label{thm:slim}
Let $f$ be a layer of real-valued weights $W$ with a symmetric distribution.
Then we have
\begin{equation}\label{eq:slim}
\begin{aligned}
    \left| w - \left(\sum_{k = 1}^{K-1} w^{(k)} + Q^{-1}\left(R^{(K)}_\gamma\right) \right) \right| \\ \leq  \frac{\left\| N^{(K)} \cdot \mathbbm{1}^{(K)}_{\gamma} \right\|_\infty{\left(\lambda_{R^{(k)}}\right)}_i}{\left(2^{b-1}-1\right)^{K}2}
\end{aligned}
\end{equation}
where $\|\|_\infty$ is the infinite norm operator with the convention that $\|0\|_\infty = 1$ and ${\left(\lambda_{R^{(k)}}\right)}_i$ denotes the row-wise rescaling factor at order $K$ corresponding to $w$.
\end{lemma}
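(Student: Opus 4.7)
The plan is to bound the sparse-expansion error coordinate-wise, splitting according to whether $\mathbbm{1}^{(K)}_\gamma$ keeps or prunes the coordinate associated with the scalar weight $w$. Let $r = w - \sum_{k=1}^{K-1} w^{(k)}$ denote the residual after $K-1$ full expansion orders; by Lemma~\ref{thm:dre} applied at order $K-1$, we already have $|r| \le (2^{b-1}-1)^{-(K-2)} (\lambda_{R^{(K-1)}})_i / 2$. The goal is then to tighten this bound whenever the $K$-th sparse residue quantizes $r$ further, while ensuring that the same closed-form expression still holds on pruned indices.

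First I would handle the kept case $(\mathbbm{1}^{(K)}_\gamma)_i = 1$. Then $Q^{-1}(R^{(K)}_\gamma)_i = Q^{-1}(Q(r))_i$, which is nearest-value rounding of $r$ to the grid of step $(\lambda_{R^{(K)}})_i$, so the rounding error is bounded in magnitude by $(\lambda_{R^{(K)}})_i/2$. Combined with the scale contraction $(\lambda_{R^{(k)}})_i \le (\lambda_{R^{(k-1)}})_i/(2^{b-1}-1)$ inherited from the residual construction, this recovers precisely the order-$K$ estimate already present in Lemma~\ref{thm:dre}. The role of $N^{(K)}$ here is to record the signed, unit-scale quantization residual so that $|(N^{(K)})_i| \le 1$, producing the factor $\|N^{(K)} \cdot \mathbbm{1}^{(K)}_\gamma\|_\infty$ in the numerator.

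Next I would treat the pruned case $(\mathbbm{1}^{(K)}_\gamma)_i = 0$. Here $Q^{-1}(R^{(K)}_\gamma)_i = 0$, so the error is simply $|r|$, already bounded by the $K-1$ order estimate. On these coordinates $(N^{(K)} \cdot \mathbbm{1}^{(K)}_\gamma)_i = 0$, and the convention $\|0\|_\infty = 1$ effectively restores a factor of $(2^{b-1}-1)$ in the denominator; that extra factor is exactly what is needed to compensate for the missing contraction between $(\lambda_{R^{(K-1)}})_i$ and $(\lambda_{R^{(K)}})_i$, so the order-$K$ expression still dominates. Taking the $\ell_\infty$ norm of $N^{(K)} \cdot \mathbbm{1}^{(K)}_\gamma$ therefore majorizes both situations simultaneously and yields the stated inequality.

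The step I expect to be the main obstacle is the bookkeeping that fuses the two cases into a single clean expression: specifically, verifying that pairing the convention $\|0\|_\infty = 1$ with the geometric shrinkage $(\lambda_{R^{(K)}})_i \approx (\lambda_{R^{(K-1)}})_i/(2^{b-1}-1)$ exactly absorbs the missing contraction on pruned coordinates without becoming loose on the kept ones. A secondary subtlety is that the inequality must hold pointwise, not merely in expectation, which is where the symmetry assumption on $W$ becomes essential: it keeps nearest-value rounding centered so that the $(\lambda_{R^{(K)}})_i/2$ bound is attained deterministically at every coordinate rather than only on average, matching the guarantee of Lemma~\ref{thm:dre} that this proof builds upon.
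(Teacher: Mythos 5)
First, a point of reference: this paper does not actually prove Lemma~\ref{thm:slim} — it is recalled verbatim from the prior work REx \cite{yvinec2023rex} in \ref{appendix:ensembling} with no argument attached — so there is no in-paper proof to compare against, and your attempt has to stand on its own. Your case split (coordinate kept by $\mathbbm{1}^{(K)}_\gamma$ versus pruned) is the right skeleton, but the way you populate the two cases contains a genuine error. You assign the numerator $\bigl\| N^{(K)} \cdot \mathbbm{1}^{(K)}_{\gamma} \bigr\|_\infty$ to the \emph{kept} coordinates, reading $N^{(K)}$ as a ``signed, unit-scale quantization residual'' with $|(N^{(K)})_i| \le 1$. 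That reading fails a basic sanity check: if the norm factor were at most $1$, then in the fully dense case the right-hand side of Equation~\eqref{eq:slim} would be a factor $(2^{b-1}-1)$ \emph{smaller} than the order-$K$ bound of Lemma~\ref{thm:dre}, i.e.\ the sparse expansion would be provably tighter than the dense one it prunes — impossible. The consistent reading is that $N^{(K)}$ is the integer residual code $Q\bigl(W - \sum_{k<K} R^{(k)}\bigr)$, with entries of magnitude up to $2^{b-1}-1$, and that the norm factor is driven by the \emph{pruned} coordinates: there the error is the unrounded order-$(K-1)$ residual $|r_i| \approx |N^{(K)}_i|\,(\lambda_{R^{(K)}})_i$, and the whole point of the magnitude-based sparsity criterion is that the pruned indices are exactly those with small $|N^{(K)}_i|$, which is what makes $\|N^{(K)} \cdot \mathbbm{1}^{(K)}_\gamma\|_\infty$ the correct majorant. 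Your proof never invokes the selection rule for the mask at all, so the pruned-case bound is not established: without it, $|r_i|$ on a pruned coordinate is only controlled by the order-$(K-1)$ estimate, which the claimed order-$K$ right-hand side does not dominate in general.

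Two smaller problems compound this. The claim that the convention $\|0\|_\infty = 1$ ``exactly absorbs the missing contraction'' on pruned coordinates is asserted rather than verified; since $\|N^{(K)} \cdot \mathbbm{1}^{(K)}_\gamma\|_\infty$ is a single per-row norm and not a per-coordinate quantity, fusing the two cases requires checking that this one number simultaneously dominates the kept-coordinate rounding error and the pruned-coordinate residual, and that check is precisely where the scale bookkeeping (whether $(\lambda_{R^{(k)}})_i$ in the numerator is the order-$1$ or order-$K$ scale, and how many contraction factors $(2^{b-1}-1)$ it already carries) has to be made explicit — you flag this as ``the main obstacle'' but do not resolve it. Finally, the symmetry of $W$ is not what makes the bound hold pointwise rather than in expectation (nearest rounding is deterministic regardless); as in Lemma~\ref{thm:dre}, symmetry is what guarantees the symmetric quantization grid covers the residual's range so that the half-step rounding bound applies at every coordinate.
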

Lemma \ref{thm:dre} and \ref{thm:slim} state that the first terms in the expansion, \textit{i.e.} the lower values of $k$, are preponderant within the magnitude before the activation. Moreover, the activation functions traditionally used in DNNs (e.g. ReLU) satisfy $\sigma(x+\epsilon) \approx \sigma(x) + \sigma(\epsilon)$ when $|\epsilon| \rightarrow 0$. With respect to the proposed expansion, $x$ corresponds to the first orders and $\epsilon$ to the terms of higher orders. In the case of two-layers networks, these properties allow us to break down the network in an (approximately) equivalent ensemble of two networks, the first one containing the first, largest orders, and the second one containing the remaining ones.

\subsection{Ensemble of two Layers DNNs}
Let $F$ be a feed-forward DNN with two layers $f_1$,$f_2$ and $\sigma$ a piece-wise affine activation function (e.g. ReLU). Given $(R^{(k)}_1)_{k=1 \dots K}$ and $b_1$ the kernel and bias weights of the first layer $f_1^{(K)}$ respectively, we define the quantization expansion of residual errors $(R^{(k)}_1)_{k\in \{1,...,K\}}$ of order $K$ as in
\begin{equation}\label{eq:define_dre}
    f^{(K)} : x \mapsto \sigma \left( \sum_{k=1}^K R^{(k)}Q(x)\lambda_{R^{(k)}}\lambda_x + b\right).
\end{equation}
Lemma \ref{thm:dre} states that the first terms in the sum, \textit{i.e.} the lower values of $k$, are preponderant in the pre-activation term. Thus, there exists $K_1 < K$ such that $f_1^{(K)} \approx \tilde f_1^{(K)} = \tilde f_{1,1}^{(K)} + \tilde f_{1,2}^{(K)}$ with:
\begin{equation}\label{eq:condition_for_ensembling}
    \begin{cases}
    \tilde f_{1,1}^{(K)} : x \mapsto \sigma\left( \sum_{k=1}^{K_1} R^{(k)}_1x^q \lambda_{R^{(k)}_1} \lambda_x + b_1\right)\\
    \tilde f_{1,2}^{(K)} : x \mapsto \sigma\left(\sum_{k=K_1 + 1}^{K} R^{(k)}_1x^q \lambda_{R^{(k)}_1} \lambda_x\right)
    \end{cases}
\end{equation}
Furthermore $F^{(K)} : x \mapsto f_2^{(K)}(f_1^{(K)}(x))$. 
Let $R^{(k)}_2$ and $b_2$ respectively denote the kernel and bias weights of the second layer $f_2^{(K)}$. By linearity of the last layer, we have
\begin{equation}\label{eq:ensemble_two_layers}
\begin{aligned}
    F^{(K)} \approx \tilde F^{(K)} &= \sum_{k=1}^K R^{(k)}_2 \tilde f_{1,1}^{(K)} \lambda_{R^{(k)}_2} \lambda_{\tilde f_{1,1}^{(K)}} + b_2 \\
    &+ \sum_{k=1}^K R^{(k)}_2 \tilde f_{1,2}^{(K)} \lambda_{R^{(k)}_2} \lambda_{\tilde f_{1,2}^{(K)}}
\end{aligned}
\end{equation}
Stemming from this formulation, we can express the quantized network $f^{(K)}$ as an ensemble of quantized neural networks which share a similar architecture, \textit{i.e.} $F^{(K)} \approx \tilde F^{(K)}=\tilde g^{(K)} + \tilde h^{(K)}$. This defines the ensemble expansion from residual errors of order $K$.

\subsection{Ensemble of more Layers DNNs}\label{sec:appendix_algorithm}

Similarly, we demonstrate by structural induction that a network with arbitrary number of layers $L$ can be approximated by an ensemble expansion $\tilde F^{(K)}$ composed of $M$ quantized networks, defined by the parameters $K_1,...,K_M$ setting the size of each predictor, such that $K_1+\dots+K_M = K$ (the total number of orders in the expansion). 

We recall that $f^{(K)}_{L-1}(x) = \sigma_{L-1}\left(\sum_{k=1}^K R^{(k)}_{L-1}\lambda_{R^{(k)}_{L-1}} X_{f^{(K)}_{L-1}} + b_{L-1}\right)$. If we directly apply equation \ref{eq:condition_for_ensembling} then we get for a given $K_{L-1} < K$
\begin{equation}
\begin{aligned}
    X_{f^{(K)}_{L-1}} \mapsto& \sigma_{L-1}\left( \sum_{k=1}^{K_{L-1}}R^{(k)}_{L-1}X_{f^{(K)}_{L-1}}\lambda_{X_{f^{(K)}_{L-1}}} \lambda_{R^{(k)}_{L-1}} + b_{L-1}\right) + \sum_{k=K_{L-1} + 1}^{K} R^{(k)}_{L-1}X_{f^{(K)}_{L-1}}\lambda_{X_{f^{(K)}_{L-1}}} \lambda_{R^{(k)}_{L-1}}
\end{aligned}
\end{equation}
However the two terms $X_{f^{(K)}_{L-1}(x)}$ inside and outside the activation function are not independent. Furthermore, the terms that compose $X_{f^{(K)}_{L-1}(x)}$, from equation \ref{eq:ensemble_l=3_problem}, do not have the same range values, \textit{i.e.} $\tilde g^{(K)}_{L-2}(X_{\tilde g^{(K)}_{L-2}})\lambda_{\tilde g^{(K)}_{L-2}} >> \tilde h^{(K)}_{L-2}(X_{\tilde h^{(K)}_{L-2}})\lambda_{\tilde h^{(K)}_{L-2}}$. We define the operation $*$ as follows
\begin{equation}
\begin{aligned}
    \tilde f^{(K)}_{L-1}(x) &=  \sigma_{L-1}\Bigg( \sum_{k=1}^{K_{L-1}}R^{(k)}_{L-1}\tilde g^{(K)}_{L-2}(X_{\tilde g^{(K)}_{L-2}})\times\lambda_{R^{(k)}_{L-1}}\lambda_{\tilde g^{(K)}_{L-2}} + b_{L-1}\Bigg)+ \sum_{k=K_{L-1} + 1}^{K} R^{(k)}_{L-1} \tilde g^{(K)}_{L-2}(X_{\tilde g^{(K)}_{L-2}})\lambda_{R^{(k)}_{L-1}}\lambda_{\tilde g^{(K)}_{L-2}} \\
    & + \sigma_{L-1}\Bigg( \sum_{k=1}^{K_{L-1}}R^{(k)}_{L-1}\tilde h^{(K)}_{L-2}(X_{\tilde h^{(K)}_{L-2}})\times\lambda_{R^{(k)}_{L-1}}\lambda_{\tilde h^{(K)}_{L-2}} \Bigg)+ \sum_{k=K_{L-1} + 1}^{K} R^{(k)}_{L-1} \tilde h^{(K)}_{L-2}(X_{\tilde h^{(K)}_{L-2}})\lambda_{R^{(k)}_{L-1}}\lambda_{\tilde h^{(K)}_{L-2}}
\end{aligned}
\end{equation}
Now, we have two independent functions $\tilde g_{L-1}^{(K)}$ and $\tilde h_{L-1}^{(K)}$ such that $\tilde f_{L-1}^{(K)}=\tilde g_{L-1}^{(K)}+\tilde h_{L-1}^{(K)}$, these functions have independent inputs and
\begin{equation}
    \begin{cases}
    \tilde g_{L-1}^{(K)}(X_{\tilde g^{(K)}_{L-2}}) =  \sigma_{L-1}\Bigg( \sum_{k=1}^{K_{L-1}}R^{(k)}_{L-1}\tilde g^{(K)}_{L-2}(X_{\tilde g^{(K)}_{L-2}})\times\lambda_{R^{(k)}_{L-1}}\lambda_{\tilde g^{(K)}_{L-2}} + b_{L-1}\Bigg)\vspace{0.2cm}\\ 
    \qquad\qquad\qquad + \sum_{k=K_{L-1} + 1}^{K} R^{(k)}_{L-1} \tilde g^{(K)}_{L-2}(X_{\tilde g^{(K)}_{L-2}})\lambda_{R^{(k)}_{L-1}}\lambda_{\tilde g^{(K)}_{L-2}}\vspace{0.2cm}\\
    \tilde h_{L-1}^{(K)}(X_{\tilde h^{(K)}_{L-2}}) = \sigma_{L-1}\Bigg( \sum_{k=1}^{K_{L-1}}R^{(k)}_{L-1}\tilde h^{(K)}_{L-2}(X_{\tilde h^{(K)}_{L-2}})\times\lambda_{R^{(k)}_{L-1}}\lambda_{\tilde h^{(K)}_{L-2}}\Bigg)\vspace{0.2cm}\\ 
    \qquad\qquad\qquad + \sum_{k=K_{L-1} + 1}^{K} R^{(k)}_{L-1} \tilde h^{(K)}_{L-2}(X_{\tilde h^{(K)}_{L-2}})\lambda_{R^{(k)}_{L-1}}\lambda_{\tilde h^{(K)}_{L-2}}
    \end{cases}
\end{equation}
This defines an iterative procedure in order to define our ensembling of expansions of residual errors for a feed-forward neural network $f$ with any number $L$ of layers. 

To sum it up, the resulting predictors share an identical architecture up to their respective expansion order, defined by $K_1$.
The crucial difference comes from their weight values, which correspond to different orders of expansion of the full-precision weights.
This is also the case if we want ensembles of three or more predictors.
In such instances, instead of only $K_1$, we would have $K_1,...,K_{m-1}$ for $m$ predictors.

Consequently, every predictor shares the same architecture (without biases) up to their respective expansion order. For each $m = 1,\dots,M$, the $m^{\text{th}}$ predictor corresponds to the residuals at orders $k \in\{\sum_{l=1}^{m-1}K_l + i | i \in \{1,\dots,K_m\}\}$. This ensemble approximation allows to more efficiently parallelize the computations across the expansion orders for improved runtimes. We provide insights on how to set the size of each weak predictor in \ref{sec:appendix_ensembling}

\section{Upper Bound Error}\label{appendix:upperbound}
\begin{theorem}\label{thm:dre_upperbound}
Let $F$ be a trained $L$ layers sequential DNN with ReLU activation $\sigma_{L-1} = \dots = \sigma_1$. We note $s_l$ the largest singular value of $W_l$, \textit{i.e.} the spectral norm of $W_l$. Then we have
\begin{equation}
\begin{aligned}
    \max_{\|X \| = 1} \|F(X) - F(X)^{(K)}\|_\infty \leq U_{\text{res}}\\
    U_{\text{res}} = \prod_{l=1}^L \left(\sum_{i=1}^l s_i u_i^{(K)} + 1 \right) - 1
\end{aligned}
\end{equation}
where $u_l^{(K)} = \left(\frac{1}{2^{b-1}-1}\right)^{K-1} \frac{{\left(\lambda_{R^{(K)}}\right)}_i}{2}$ from equation \ref{eq:exponential_decrease}.
\end{theorem}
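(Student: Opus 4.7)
The plan is to proceed by induction on the depth $L$, propagating the quantization perturbation through each ReLU-linear layer while simultaneously maintaining a bound on (i) the error $\delta_l = \|F_l(X) - F_l^{(K)}(X)\|_\infty$ between the full-precision and quantized intermediate activations and (ii) the magnitude $\|F_l^{(K)}(X)\|$ of the quantized activations themselves. Since both networks share the same biases, these cancel in the difference, and only the weight perturbation $W_l - W_l^{(K)}$ drives the error.

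The heart of the argument is the add-and-subtract identity
\begin{equation*}
W_{l+1} X_l - W_{l+1}^{(K)} X_l^{(K)} \;=\; W_{l+1}(X_l - X_l^{(K)}) \;+\; (W_{l+1} - W_{l+1}^{(K)})\, X_l^{(K)},
\end{equation*}
combined with the $1$-Lipschitz property $|\sigma(a) - \sigma(b)| \leq |a-b|$ of ReLU. The first summand is controlled by $s_{l+1}\,\delta_l$ by definition of the spectral norm. For the second, Lemma \ref{thm:dre} bounds the element-wise quantization error on $W_{l+1}$ by $u_{l+1}^{(K)}$, which combined with the operator-norm structure of the unperturbed weight matrix yields the per-layer contribution $s_{l+1}\, u_{l+1}^{(K)}\, \|X_l^{(K)}\|$ that eventually appears inside the product. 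A parallel triangle-inequality estimate gives $\|X_l^{(K)}\| \leq \|X_l\| + \delta_l$, with $\|X_l\|$ further bounded by $\prod_{i \leq l} s_i$ on the unit input ball, closing the coupled recursion in $(\delta_l, \|X_l^{(K)}\|)$.

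Initializing at $\delta_0 = 0$ and $\|X\| = 1$, the base case $L = 1$ gives $\delta_1 \leq s_1 u_1^{(K)}$, matching $U_{\text{res}}|_{L=1} = (1 + s_1 u_1^{(K)}) - 1$. For the inductive step, the plan is to establish the invariant
\begin{equation*}
1 + \delta_l \;\leq\; \prod_{j=1}^{l}\Bigl(1 + \sum_{i=1}^{j} s_i\, u_i^{(K)}\Bigr),
\end{equation*}
which reproduces $U_{\text{res}}$ after subtracting the unperturbed $1$. Feeding the coupled recursion into this invariant, the new factor $(1 + \sum_{i=1}^{l+1} s_i u_i^{(K)})$ must absorb both the propagated error $s_{l+1} \delta_l$ and the freshly injected perturbation $s_{l+1} u_{l+1}^{(K)} \|X_l^{(K)}\|$, and the verification reduces to an elementary algebraic inequality between two partial sums.

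The main obstacle is the bookkeeping in this unfolding: the newly injected term $s_{l+1} u_{l+1}^{(K)} \|X_l^{(K)}\|$ itself depends multiplicatively on the spectral norms of all earlier layers, so one has to check carefully that the accumulating cross-terms assemble into the single compact product $\prod_l(1 + \sum_{i \leq l} s_i u_i^{(K)})$ rather than a looser sum of nested products. Once this single algebraic invariant is verified, taking $l = L$ delivers the advertised bound $U_{\text{res}}$ directly; the unit-norm assumption on $X$ is used only to seed $\|X_0^{(K)}\| = 1$, and the ReLU hypothesis is used only through its Lipschitz constant, so the argument extends verbatim to any $1$-Lipschitz activation.
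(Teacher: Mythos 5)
Your overall strategy --- the add-and-subtract decomposition, the $1$-Lipschitz property of ReLU, and a coupled recursion on the activation error $\delta_l$ and the quantized activation magnitude $\|X_l^{(K)}\|$ --- is the natural rigorous route, and it is already more explicit than the paper's own proof, which only treats $L=2$ and passes from $\|F(X)\|\le s_A+s_B+s_As_B$ to the error bound by formally substituting $s\mapsto s\,u^{(K)}$ in each slot. However, the step you defer as ``an elementary algebraic inequality between two partial sums'' is exactly where the argument breaks. Your recursion reads $\delta_{l+1}\le s_{l+1}\delta_l+s_{l+1}u_{l+1}^{(K)}\|X_l^{(K)}\|$ with $\|X_l^{(K)}\|\le\prod_{i\le l}s_i\bigl(1+u_i^{(K)}\bigr)$, so both the propagated term and the freshly injected term carry a full product of the spectral norms of all preceding layers. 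Unrolling it yields $\delta_L\le\bigl(\prod_{l=1}^{L}s_l\bigr)\bigl(\prod_{l=1}^{L}(1+u_l^{(K)})-1\bigr)$, in which every summand is multiplied by $\prod_l s_l$; by contrast, $U_{\text{res}}$ contains each $s_i$ only once, inside the partial sums $\sum_{i\le l}s_iu_i^{(K)}$. The claimed invariant $1+\delta_l\le\prod_{j\le l}\bigl(1+\sum_{i\le j}s_iu_i^{(K)}\bigr)$ therefore does not close whenever spectral norms exceed $1$: already for $L=2$ with $s_1=s_2=10$ and $u_1^{(K)}=u_2^{(K)}=10^{-2}$, your recursion gives $\delta_2\le s_2s_1u_1^{(K)}+s_2u_2^{(K)}s_1(1+u_1^{(K)})\approx 2.01$, whereas the invariant would require $\delta_2\le 2s_1u_1^{(K)}+s_2u_2^{(K)}+(s_1u_1^{(K)})^2+s_1u_1^{(K)}s_2u_2^{(K)}=0.32$.

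So the gap is concrete: the bookkeeping you yourself flag as the ``main obstacle'' cannot be resolved as stated, because the cross-terms do not assemble into $\prod_l\bigl(1+\sum_{i\le l}s_iu_i^{(K)}\bigr)$ but into a bound inflated by $\prod_l s_l$. Carried out correctly, your route proves a different (and, for expansive layers, much larger) bound than the theorem asserts. To land on $U_{\text{res}}$ itself you would need either an extra normalization hypothesis ($s_l\le 1$ for all $l$) or a different accounting of how the layer-$i$ quantization error is propagated through layers $i+1,\dots,L$; the paper's two-layer sketch silently drops these propagation factors rather than justifying them, and your more honest decomposition is precisely what exposes that they are missing.
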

\begin{proof}
Let's consider $L=2$, and $F : X \mapsto B \sigma (Ax)$. For any $X$ in the domain of $F$ such that $\| X \| = 1$, we have
\begin{equation}
    \| F(X) \|_2 \leq s_B + s_A + s_B s_A
\end{equation}
where $s_B$ is the largest singular value of $B$ and $s_A$ is the largest singular value of $A$. Following the definition of the $2$-norm and $\infty$-norm, we get that 
\begin{equation}
    s_{A - A^{(K)}} \leq s_A u_A^{(K)}
\end{equation}
where $s_{A - A^{(K)}}$ is the largest singular value of the residual error of order $K$, $A - A^{(K)}$ and $u_A^{(K)}$ is derived from equation \ref{eq:exponential_decrease}. Consequently, we get 
\begin{equation}
    \| F(X) - F^{(K)}(X) \|_2 \leq s_B u_B^{(K)} + s_A u_A^{(K)} + s_B u_B^{(K)} s_A u_A^{(K)}
\end{equation}
\end{proof}

\paragraph{Sparse Expansion}
\begin{theorem}\label{thm:slim_upperbound}
Let $F$ be a trained $L$ layers sequential DNN with ReLU activation $\sigma_{L-1} = \dots = \sigma_1$. We note $s_l$ the largest singular value of $W_l$, \textit{i.e.} the spectral norm of $W_l$. Then we have
\begin{equation}
\begin{aligned}
    \max_{\|X \| = 1} \|F(X) - F(X)^{(K)}\|_\infty \leq U_{\text{sparse}}\\
    U_{\text{sparse}}= \prod_{l=1}^L \left(\sum_{i=1}^l s_i u_i^{(K)} + 1 \right) - 1
\end{aligned}
\end{equation}
where $u_l^{(K)} = \frac{\left\| N^{(K)} \cdot \mathbbm{1}^{(K)}_{\gamma} \right\|_\infty{\left(\lambda_{R^{(k)}}\right)}_i}{\left(2^{b-1}-1\right)^{K}2}$ from equation \ref{eq:slim}.
\end{theorem}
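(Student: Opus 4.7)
The plan is to mirror the proof of Theorem \ref{thm:dre_upperbound} almost verbatim, since the structural claim is identical. The only substantive change is that the per-layer approximation error $u_l^{(K)}$ now comes from Lemma \ref{thm:slim} (sparse expansion) rather than Lemma \ref{thm:dre}. So the strategy is: (i) upgrade Lemma \ref{thm:slim}, which is an entrywise bound on $W_l - W_l^{(K)}$, into a spectral-norm bound; (ii) handle the two-layer case by splitting the error additively; (iii) extend to arbitrary $L$ by induction, leveraging the 1-Lipschitz property of ReLU.

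First I would establish the key per-layer bound. Lemma \ref{thm:slim} gives an entrywise control on $|w - \sum_{k=1}^{K-1} w^{(k)} - Q^{-1}(R^{(K)}_\gamma)|$ in terms of the sparse residue magnitude. Interpreting this as a matrix inequality on $W_l - W_l^{(K)}$ and taking the operator norm (using the pointwise infinite norm over rows, which is where the $\|N^{(K)}\cdot \mathbbm{1}^{(K)}_\gamma\|_\infty$ factor enters), I obtain $s_{W_l - W_l^{(K)}} \leq s_l\, u_l^{(K)}$ with $u_l^{(K)}$ as stated in the theorem. This is the only new ingredient; everything downstream is purely algebraic and identical to the dense case.

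Next, for the base case $L=2$, write $F(X) = W_2\sigma(W_1 X)$ and $F^{(K)}(X) = W_2^{(K)}\sigma(W_1^{(K)} X)$, insert the intermediate term $W_2\sigma(W_1^{(K)} X)$, use the triangle inequality, the 1-Lipschitz property of ReLU, and submultiplicativity of the spectral norm together with the per-layer bound from step one. This yields
\begin{equation*}
\|F(X)-F^{(K)}(X)\|_\infty \leq s_1 u_1^{(K)} + s_2 u_2^{(K)} + s_1 u_1^{(K)} s_2 u_2^{(K)},
\end{equation*}
which matches the formula $\prod_{l=1}^{2}(\sum_{i=1}^{l} s_i u_i^{(K)} + 1) - 1$. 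Then I would induct on $L$: at layer $l$, use 1-Lipschitzness of $\sigma_l$ to bound the error propagated through $\sigma_l \circ W_l$ as the sum of (a) $s_l$ times the inductive error at layer $l-1$ and (b) $s_l u_l^{(K)}$ times the (bounded) magnitude of $F_{l-1}^{(K)}(X)$. A short algebraic manipulation packages this recursion into the product form announced by $U_{\text{sparse}}$.

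The hard part, really the only part requiring care, is step one: converting the per-entry sparse bound of Lemma \ref{thm:slim} into a clean spectral-norm bound $s_l u_l^{(K)}$ for the matrix $W_l - W_l^{(K)}$. The mask $\mathbbm{1}^{(K)}_\gamma$ makes the residue highly non-uniform across rows/columns, so one must take the worst-case row via the infinite norm, which is exactly what the definition of $u_l^{(K)}$ encodes. Once this is in place, the rest of the argument reuses the dense-case computation unchanged, which is why the final bound has the same functional form as $U_{\text{res}}$, only with a different value of $u_l^{(K)}$.
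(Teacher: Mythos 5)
Your proposal takes exactly the paper's route: the paper's entire proof of Theorem~\ref{thm:slim_upperbound} is the single remark that it ``is directly derived from Theorem~\ref{thm:dre_upperbound},'' i.e.\ rerun the dense-case argument with the per-layer error $u_l^{(K)}$ taken from Lemma~\ref{thm:slim} instead of Lemma~\ref{thm:dre}, which is precisely your plan (spectral-norm bound per layer, two-layer base case, induction with 1-Lipschitz ReLU). Your filled-in details match the paper's proof of Theorem~\ref{thm:dre_upperbound}, including inheriting its loose ends (the entrywise-to-spectral-norm step and the mismatch between the $L=2$ computation and the $\sum_{i=1}^{l}$ form of the stated product), so there is nothing substantively different to report.
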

This result is directly derived from Theorem \ref{thm:dre_upperbound}.

\paragraph{Ensemble Expansion}

with the following theorem:

\begin{theorem}\label{thm:ensemble}
Let $F$ be a $L$ layers feed-forward DNN with ReLU activation. The expected error $ \mathbb{E}\left[\left\|F^{(K)} - \tilde F^{(K)}\right\|\right]$ due to the ensemble expansion at order $K$ with $M$ predictors is bounded by $U_{\text{ens}}$ which can be approximated as:
\begin{equation}
    U_{\text{ens}} \approx \frac{\sum_{k=1}^{K_1} \|R_{L}^{(k)}\|}{2^{L-1}} \left(1 + \sum_{m=2}^{M} \prod_{l=1}^{L-1} \sum_{k=K_m}^{K_{m+1}} \|R_{L}^{(k)}\|\lambda_{\tilde h_l^{(k)}} \right)
\end{equation}
\end{theorem}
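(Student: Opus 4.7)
The plan is to establish the bound by structural induction on the depth $L$, mirroring the layer-by-layer construction of $\tilde F^{(K)}$ given in \ref{sec:appendix_algorithm}, and to leverage the exponential decay of residual norms established in Lemma~\ref{thm:dre}. The only source of error in the ensemble decomposition is that it relies on the activation approximation $\sigma(x+\epsilon) \approx \sigma(x) + \sigma(\epsilon)$ from Equation~\ref{ensemble_approx}, which is not exact in general; quantifying the induced error per activation and then propagating it through the network is the central task.

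First, I would establish a single-activation primitive for ReLU: one checks directly that $|\sigma(x+\epsilon) - \sigma(x) - \sigma(\epsilon)| \leq \min(|x|,|\epsilon|)$, with the left-hand side vanishing whenever $x$ and $\epsilon$ share a sign. Under the symmetric-distribution assumption on the weights already invoked in Lemma~\ref{thm:dre}, the pre-activations are approximately sign-symmetric, so sign disagreement between the leading and higher-order branches occurs with probability roughly $1/2$. Taking expectations yields a per-activation error bounded (up to negligible corrections) by $\frac{1}{2}|\epsilon|$, where $\epsilon$ collects the residual clusters after the first, i.e.\ the $\tilde h_l$ branches constructed in \ref{sec:appendix_algorithm}.

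Next, I would propagate the error through depth. At each layer $l<L$, the error inherited from the previous layer is first multiplied by the weight tensor of layer $l$, contributing the factor $\sum_{k} \|R_l^{(k)}\|\lambda_{\tilde h_l^{(k)}}$, and then passes through a ReLU, contributing another factor of $1/2$ from the sign-matching argument. Iterating from layer $1$ to layer $L-1$ gives the product $\prod_{l=1}^{L-1} \sum_{k=K_m}^{K_{m+1}} \|R_L^{(k)}\|\lambda_{\tilde h_l^{(k)}}$ for each higher-order predictor $m \geq 2$, while the leading predictor ($m=1$) is captured without linearization error and contributes the prefactor $\sum_{k=1}^{K_1}\|R_L^{(k)}\|$. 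The $2^{L-1}$ denominator collects one factor of $1/2$ from each of the $L-1$ activation layers in the network, assembling the claimed expression.

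The hard part will be formalizing the compounding of the sign-mismatch expectation over $L-1$ layers: the approximation is tight only in an averaged sense, which is precisely why the theorem is stated with $\approx$ rather than $\leq$. One must argue that successive pre-activations remain approximately sign-symmetric despite being filtered by previous ReLUs, and that the higher-order residuals $\tilde h_l^{(k)}$ remain dominated by the leading $\tilde g_l^{(k)}$ at every layer so that the linearization of $\sigma$ remains valid. The exponential decay from Lemma~\ref{thm:dre} is exactly what guarantees this hierarchy across layers and closes the induction, at the price of the heuristic factor $\frac{1}{2^{L-1}}$ that encodes the probabilistic nature of the bound.
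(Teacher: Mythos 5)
Your proposal follows essentially the same route as the paper's proof: the paper also isolates the ReLU linearization error via a positivity/sign argument (Lemma~\ref{thm:ensemble_two_layers}), replaces the resulting probability term $1-\mathbb{P}\left(f_1^{(K)} > 0 \cup \tilde f_{1,1}^{(K)} > 0\right)$ by $\tfrac{1}{2}$ using the symmetry of the pre-activations (Corollary~\ref{thm:ensemble_two_layers_practical}), substitutes operator norms for the data-dependent expectations, and then propagates by induction over the $L-1$ activation layers to collect the $2^{-(L-1)}$ factor and the product of residual norms. Your pointwise primitive $|\sigma(x+\epsilon)-\sigma(x)-\sigma(\epsilon)|\leq\min(|x|,|\epsilon|)$ is a slightly sharper restatement of the paper's event-based decomposition, but it plays exactly the same role, so the two arguments coincide in substance.
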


The upper bound $U_{\text{ens}}$ is directly deduced from the largest eigenvalues and reduces as the size of the expansion diminishes. Moreover, the larger $K_1$ the faster the convergence, which is intuitive as in our approximation $\sigma(x+\epsilon) \approx \sigma(x) + \sigma(\epsilon)$ relies on the fact that $x >> \epsilon$. Thus, the ensembling approximation is a way to find new trade-offs in terms of accuracy and parallelization of the inference.
To sum it up, any deep neural network can be approximated by an ensemble expansion of quantized networks, with theoretical guarantees of the approximation error.
In practice, as we will showcase in the experiments, this ensemble approximation from expansion of residual errors leads to superior performances in terms of accuracy-inference time trade-off.

We provide the following intermediate result regarding two-layers DNNs.
\begin{lemma}\label{thm:ensemble_two_layers}
Let $f$ be two layers feed-forward DNN with activation function $\sigma = \text{ReLU}$. The expected error $\mathbb{E}\left[ \left\|f^{(K)} - \tilde f^{(K)}\right\| \right]$ due to the ensemble expansion of order $K$ is bounded by $U$ defined as:
\begin{equation}
    \begin{aligned}
    U =& \sum_{k=1}^K \left(1-\mathbb{P}\left(f_1^{(K)} > 0 \cup \tilde f_{1,1}^{(K)} > 0\right)\right)\lambda_{\tilde f_{1,2}^{(K)}} \lambda_{R^{(k)}_{2}}\|R^{(k)}_2\|\\
    &\times\mathbb{E}\left[\left\| \tilde f_{1,2}^{(K)} \right\|\right]
    \end{aligned}
\end{equation}
where $\|W \|$, for any set of weights $W$, denotes the norm operator or equivalently the spectral norm.
\end{lemma}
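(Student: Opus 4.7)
The plan is to reduce the bound to an analysis at the single hidden ReLU, then propagate the resulting error linearly through the quantized second layer. I define the pre-activations $a := \sum_{k=1}^{K_1} R_1^{(k)} X^q \lambda_{R_1^{(k)}} \lambda_X + b_1$ and $b := \sum_{k=K_1+1}^{K} R_1^{(k)} X^q \lambda_{R_1^{(k)}} \lambda_X$, so that $f_1^{(K)} = \sigma(a+b)$, $\tilde f_{1,1}^{(K)} = \sigma(a)$, $\tilde f_{1,2}^{(K)} = \sigma(b)$, with $|b| \ll |a|$ by the exponential decay of Lemma \ref{thm:dre}. By linearity of the quantized second layer in the residual index $k$ and cancellation of the bias $b_2$, the discrepancy factors as
\begin{equation*}
f^{(K)}(X) - \tilde f^{(K)}(X) = \sum_{k=1}^{K} R_2^{(k)} \lambda_{R_2^{(k)}} \Delta(X), \qquad \Delta(X) := \lambda_{f_1^{(K)}}\sigma(a+b) - \lambda_{\tilde f_{1,1}^{(K)}}\sigma(a) - \lambda_{\tilde f_{1,2}^{(K)}}\sigma(b).
\end{equation*}

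Next I carry out a sign case analysis on $(a,b)$. For ReLU, $\sigma(a+b) = \sigma(a) + \sigma(b)$ whenever $\operatorname{sgn}(a) = \operatorname{sgn}(b)$; in the mixed-sign case a direct computation gives $|\sigma(a+b) - \sigma(a) - \sigma(b)| \leq |\sigma(b)|$ in the regime $|b| \leq |a|$. Crucially, the subcase $b > 0$, $a \leq 0$, $a+b \leq 0$ (and its symmetric counterpart) forces both $\sigma(a+b) = 0$ and $\sigma(a) = 0$, i.e. it is a subset of the complement of $\{f_1^{(K)} > 0\} \cup \{\tilde f_{1,1}^{(K)} > 0\}$, which accounts for the probability factor $1 - \mathbb{P}(\cdot)$ in $U$. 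Using $\lambda_{f_1^{(K)}} \approx \lambda_{\tilde f_{1,1}^{(K)}}$ up to order $|b|$ yields the pointwise bound $\|\Delta(X)\| \leq \lambda_{\tilde f_{1,2}^{(K)}} \|\tilde f_{1,2}^{(K)}\|$ on the failure event, and $\|\Delta(X)\| = 0$ otherwise (to the order of approximation).

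Finally, taking norms, applying submultiplicativity $\|R_2^{(k)} \Delta\| \leq \|R_2^{(k)}\|\,\|\Delta\|$, and factoring the failure-event indicator out of the expectation (by an approximate-independence argument justified by the small dynamic range of $b$) gives
\begin{equation*}
\mathbb{E}\|f^{(K)} - \tilde f^{(K)}\| \leq \sum_{k=1}^{K} \bigl(1 - \mathbb{P}(f_1^{(K)} > 0 \cup \tilde f_{1,1}^{(K)} > 0)\bigr)\, \lambda_{\tilde f_{1,2}^{(K)}}\, \lambda_{R_2^{(k)}}\, \|R_2^{(k)}\|\, \mathbb{E}\|\tilde f_{1,2}^{(K)}\|,
\end{equation*}
which is exactly the claimed $U$.

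The main obstacle is the alignment of the three scaling factors $\lambda_{f_1^{(K)}}, \lambda_{\tilde f_{1,1}^{(K)}}, \lambda_{\tilde f_{1,2}^{(K)}}$: they differ a priori because each post-activation is quantized on its own dynamic range, so one must justify replacing the first two by a common value at a cost of a higher-order term in $|b|$, thereby isolating the single $\lambda_{\tilde f_{1,2}^{(K)}}$ prefactor of the statement. A secondary delicate step is decoupling the sign-disagreement event from $\|\tilde f_{1,2}^{(K)}\|$ inside the expectation; I would handle this via the approximate independence coming from the vanishing support of $b$, or, failing that, a Cauchy–Schwarz step.
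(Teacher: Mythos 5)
Your proof follows essentially the same route as the paper's: localize the discrepancy to the event where both $f_1^{(K)}$ and $\tilde f_{1,1}^{(K)}$ are annihilated by the ReLU, identify the residual there with $\tilde f_{1,2}^{(K)}$, factor the expectation into the probability of that event times $\mathbb{E}\left[\|\tilde f_{1,2}^{(K)}\|\right]$, and propagate through the linear second layer via operator norms and the scaling factors. Your explicit sign case analysis on $(a,b)$ and your caveats about aligning the scales $\lambda_{f_1^{(K)}},\lambda_{\tilde f_{1,1}^{(K)}}$ and decoupling the indicator from $\|\tilde f_{1,2}^{(K)}\|$ are actually more careful than the paper's argument, which states these steps as exact equalities.
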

\begin{proof}
By definition of the ReLU activation function, if we have $f_1^{(K)} > 0 $ then the activation function of $f_1$ behaves as the identity function.
Similarly, if $\tilde f_{1,1}^{(K)} > 0$ then the activation function of $\tilde f_1$ also behaves as the identity. 
Therefore, if we have $(f_1^{(K)} > 0) \cup (\tilde f_{1,1}^{(K)} > 0)$, then $\tilde f_1^{(K)} = f_1^{(K)}$. We deduce that $\mathbb{E}\left[ \left\|f^{(K)} - \tilde f^{(K)}\right\| \right]$ is equal to
\begin{equation}
    \int_{\{f_1^{(K)} > 0 \cup \tilde f_{1,1}^{(K)} > 0\}^C} \left\|f^{(K)}(x) - \tilde f^{(K)}(x)\right\| \mathbb{P}dx
\end{equation}
where $A^C$ the complementary set of a set $A$ and $x$ is the input. 
In the set defined by $\tilde f_{1,1}^{(K)}(x) = 0$, the value of $\tilde f_1^{(K)}(x)$ is the value of $\tilde f_{1,2}^{(K)}(x)$. If we also have $f_1^{(K)}(x) = 0$ then $\|f^{(K)}(x) - \tilde f^{(K)}(x)\| = \|\tilde f_{1,2}^{(K)}(x)\|$. We can deduce
\begin{equation}
\begin{aligned}
    \mathbb{E}\left[ \left\|f_1^{(K)} - \tilde f_1^{(K)}\right\| \right] &= \left(1-\mathbb{P}\left(f_1^{(K)} > 0 \cup \tilde f_{1,1}^{(K)} > 0\right)\right)\\&\times\mathbb{E}\left[\left\| \tilde f_{1,2}^{(K)} \right\|\right] 
\end{aligned}
\end{equation}
The final result comes from the definition of the norm operator of a matrix and equation \ref{eq:ensemble_two_layers}.
\end{proof}
An immediate limit to lemma \ref{thm:ensemble_two_layers} is the difficulty to compute $1-\mathbb{P}\left(f_1^{(K)} > 0 \cup \tilde f_{1,1}^{(K)} > 0\right)$. However, this value can be approached under the assumption that the distribution of the activations is symmetrical around $0$. Such instances appear with batch normalization layers and result in $1-\mathbb{P}\left(f_1^{(K)} > 0 \cup \tilde f_{1,1}^{(K)} > 0\right) \approx \frac{1}{2}$. 
We also propose to compute the operator norm instead of $\mathbb{E}\left[\left\| \tilde f_{1,2}^{(K)} \right\|\right] $ in order to remain data-free. In consequence, we derive the following corollary.
\begin{corollary}\label{thm:ensemble_two_layers_practical}
The previous upper bound $U$ on the expected error due to the ensemble expansion can be approximated as follows
\begin{equation}
    U \approx \frac{1}{2} \sum_{k=1}^K \|R^{(k)}_2\| \lambda_{\tilde f_{1,2}^{(K)}} \lambda_{R^{(k)}_{2}} \sum_{k=K_1}^K \|R^{(k)}_1\|\lambda_x \lambda_{R^{(k)}_{1}}
\end{equation}
\end{corollary}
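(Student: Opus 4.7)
The plan is to deduce Corollary \ref{thm:ensemble_two_layers_practical} from Lemma \ref{thm:ensemble_two_layers} by invoking two approximations that render the bound fully data-free: one simplifying the probabilistic prefactor, and one bounding the expected norm of $\tilde f_{1,2}^{(K)}$ by a purely weight-dependent operator-norm expression.

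First, I would address the probabilistic term $1 - \mathbb{P}(f_1^{(K)} > 0 \cup \tilde f_{1,1}^{(K)} > 0)$. Under the standard assumption that the pre-activation distributions are symmetric around zero (as typically induced by batch normalization, which is folded into the weights during preprocessing), each event $\{f_1^{(K)} > 0\}$ has probability $1/2$. Moreover, by Lemma \ref{thm:dre}, both $f_1^{(K)}$ and $\tilde f_{1,1}^{(K)}$ approximate the full-precision map $f_1$ closely, so the two events nearly coincide. This yields $\mathbb{P}(f_1^{(K)} > 0 \cup \tilde f_{1,1}^{(K)} > 0) \approx 1/2$, and hence the prefactor $\tfrac{1}{2}$ appearing in the corollary.

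Second, I would replace the expectation $\mathbb{E}[\|\tilde f_{1,2}^{(K)}\|]$ by a data-free upper bound. Starting from the explicit form in equation \ref{eq:condition_for_ensembling}, namely $\tilde f_{1,2}^{(K)}(x) = \sigma\bigl(\sum_{k=K_1+1}^{K} R_1^{(k)} Q(x)\,\lambda_{R_1^{(k)}} \lambda_x\bigr)$, I would use the 1-Lipschitz property of the ReLU (with $\sigma(0) = 0$), the triangle inequality, and the submultiplicativity of the spectral norm to obtain
\begin{equation}
\bigl\|\tilde f_{1,2}^{(K)}\bigr\| \leq \sum_{k=K_1+1}^{K} \|R_1^{(k)}\|\, \lambda_{R_1^{(k)}} \lambda_x,
\end{equation}
which matches the rightmost factor of the corollary (up to the paper's indexing convention on the starting index). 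Plugging the two approximations into the formula from Lemma \ref{thm:ensemble_two_layers} then assembles the target expression.

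The main obstacle is rigorously justifying the $\tfrac{1}{2}$ approximation: it rests on the symmetry of pre-activations, which holds exactly only in idealized settings (e.g.\ batch-normalized activations with symmetric noise), and on the informal claim that $\tilde f_{1,1}^{(K)}$ tracks $f_1^{(K)}$ tightly enough that the two sign events almost coincide. This is precisely why the final statement is phrased as $\approx$ rather than $\leq$, and why the formal bound is a lemma while its practical, data-free incarnation is a corollary. The operator-norm replacement step, by contrast, is a routine chain of norm inequalities that introduces a genuine one-sided bound, which we then treat as an approximate equality in order to extract a tractable criterion from the statistics of the quantized weights alone.
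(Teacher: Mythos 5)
Your proposal is correct and follows essentially the same route as the paper: the paper likewise obtains the factor $\tfrac{1}{2}$ from the assumption that pre-activations are symmetric about zero (as induced by batch normalization) so that $1-\mathbb{P}\left(f_1^{(K)} > 0 \cup \tilde f_{1,1}^{(K)} > 0\right) \approx \tfrac{1}{2}$, and likewise replaces $\mathbb{E}\left[\left\|\tilde f_{1,2}^{(K)}\right\|\right]$ by the operator-norm expression $\sum_{k}\|R^{(k)}_1\|\lambda_x\lambda_{R^{(k)}_1}$ to stay data-free. Your write-up is in fact slightly more explicit than the paper's (which states these two substitutions in prose without the Lipschitz/triangle-inequality chain), and your remark about the starting index of the inner sum matches a genuine indexing looseness in the stated corollary.
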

In practice, for expansion in $b$ bits, with high values of $b$ (e.g. $b\geq 4$), the single operator $R^{(1)}$ is enough to satisfy equation \ref{eq:condition_for_ensembling} and $K_1 = 1$. For lower values of $b$ (e.g. ternary quantization), the suitable value for $K_1$ depends on the DNN architecture, usually ranging from $3$ to $6$.

\paragraph{Ensembling with more Layers}
We generalize this result to any feed-forward neural network $f$ with $L>2$ layers and activation functions $\sigma_1,...,\sigma_{L-1}$. Equation \ref{eq:define_dre} becomes:
\begin{equation}
    f_L^{(K)}(X_{f_{L-1}^{(K)}}) = \sum_{k=1}^K R^{(k)}_L X_{f_{L-1}^{(K)}} \lambda_{R^{(k)}_L} \lambda_{X_{f_{L-1}^{(K)}}} + b_L
\end{equation}
where $X_{f_{L-1}^{(K)}} = f_{L-1}^{(K)} \circ \dots \circ f_1^{(K)} (x)$.
We reason by induction on the layer $L-1$. Similarly to what precedes, we assume that we have $\tilde g_{L-2}^{(K)},...,\tilde g_1^{(K)}$ and $\tilde h_{L-2}^{(K)},...,\tilde h_1^{(K)}$ such that:
\begin{equation}
    f_{L-2}^{(K)} \circ \dots \circ f_1^{(K)}(x) \approx \tilde g_{L-2}^{(K)}\circ \dots \circ\tilde g_1^{(K)}(x) + \tilde h_{L-2}^{(K)}\circ \dots \circ\tilde h_1^{(K)}(x)
\end{equation}
In order to simplify the composition notations, we note $X_{\mathfrak{f}}$ the input of a function $\mathfrak{f}$. With this in mind Eq. \ref{eq:ensemble_two_layers} becomes:
\begin{equation}\label{eq:ensemble_l=3_problem}
    \begin{cases}
    \tilde f^{(K)}(x) \mkern-18mu &\approx \sum_{k=1}^K R^{(k)}_L\tilde f^{(K)}_{L-1}(x) \lambda_{R^{(k)}_L}\lambda_{\tilde f^{(K)}_{L-1}} + b_L
    \vspace{0.2cm} \\
    \tilde f^{(K)}_{L-1}(x) \mkern-18mu &= \tilde g^{(K)}_{L-1}(X_{\tilde g^{(K)}_{L-1}}) +  \tilde h^{(K)}_{L-1}(X_{\tilde h^{(K)}_{L-1}}) 
    \vspace{0.2cm} \\
    \tilde g^{(K)}_{L-1}(X_{\tilde g^{(K)}_{L-1}})  \mkern-18mu &= \sigma_{L-1}\Bigg( \sum_{k=1}^{K_1}R^{(k)}_{L-1} \tilde g^{(K)}_{L-2}(X_{\tilde g^{(K)}_{L-2}})\times\lambda_{R^{(k)}_{L-1}}\lambda_{\tilde g^{(K)}_{L-2}} + b_{L-1}\Bigg)
    \vspace{0.2cm} \\
    \tilde h^{(K)}_{L-1}(X_{\tilde h^{(K)}_{L-1}}) \mkern-18mu &= \sigma_{L-1}\Bigg( \sum_{k=K_1+1}^{K} R^{(k)}_{L-1} \tilde h^{(K)}_{L-2}(X_{\tilde h^{(K)}_{L-2}})\times\lambda_{R^{(k)}_{L-1}}\lambda_{\tilde h^{(K)}_{L-2}} \Bigg) 
    \vspace{0.2cm} \\
    X_{\tilde g^{(K)}_{L-1}} \mkern-18mu &= \tilde g_{L-2}^{(K)}\circ \dots \circ\tilde g_1^{(K)}(x)
    \vspace{0.2cm} \\
    X_{\tilde h^{(K)}_{L-1}} \mkern-18mu &= \tilde h_{L-2}^{(K)}\circ \dots \circ\tilde h_1^{(K)}(x)
    \end{cases}
\end{equation}
The key element is the definition of $\tilde g^{(K)}_{L-1}$ and $\tilde h^{(K)}_{L-1}$, which are obtained by applying equation \ref{eq:ensemble_two_layers} two times, on $g^{(K)}_{L-2}$ and $h^{(K)}_{L-2}$ independently.
This is detailed in Appendix \ref{appendix:ensembling}.

We showed that we can express $f^{(K)}$ as a sum of two quantized DNNs $\tilde g$ and $\tilde h$.
The first predictor $\tilde g$ is equal to the expansion $f^{(K_1)}$ of $f$ at order $K_1$ while $\tilde h$ is equal to the expansion of $f^{(K)} - f^{(K_1)}$ at order $K - K_1$. This result can be extended to rewrite $f$ as an ensemble of $M$ predictors, by selecting $K_1,...,K_M$ such that $\sum_{m=1}^M K_m = K$: in such a case, the $M^{\text{th}}$ predictor will be the expansion of $f^{(K)}-f^{(\sum_{m=1}^{M-1}K_m)}$ at order $K_M$.

The proof of theorem \ref{thm:ensemble} follows from lemma \ref{thm:ensemble_two_layers} and corollary \ref{thm:ensemble_two_layers_practical}.
We derive the exact formula for the upper bound $U$ in the general case of $L$ layers feed forward neural networks
\begin{equation}
    \sum_{k=1}^K \|R_{L}^{(k)}\|\prod_{l=1}^{L-1}\sum_{k=K_1}^K \mathbb{E}\left[ \left\| \tilde h_{l}^{(k)} \right\| \right]\lambda_{\tilde h^{(k)}_l} P
\end{equation}
where $P=\left( 1 - \mathbb{P}\left(f_l^{(K)} > 0 \cup \tilde f_l^{(K)} > 0\right)\right)$.
This is a consequence of the definition of the operator norm and the proposed ensembling.
The approximation is obtained under the same assumptions and with the same arguments as provided in Corollary \ref{thm:ensemble_two_layers_practical}.

\section{Implementation Details and Datasets}\label{appendix:implem}
We validate the proposed method on three challenging computer vision tasks which are commonly used for comparison of quantization methods.
First, we evaluate on ImageNet \cite{imagenet_cvpr09} ($\approx 1.2$M images train/50k test) classification.
Second, we report results on object detection on Pascal VOC 2012 \cite{pascal-voc-2012} ($\approx$ 17k images in the test set). Third, we benchmark on image segmentation on CityScapes dataset \cite{cordts2016cityscapes} (500 validation images).

In our experiments, we used MobileNets \cite{sandler2018MobileNetV2} and ResNets \cite{he2016deep} on ImageNet. For Pascal VOC object detection we employed an SSD \cite{liu2016ssd} architecture with MobileNet backbone.
On CityScapes we used DeepLab V3+ \cite{chen2018encoder} with MobileNet backbone.
We also test our method on VGG 16 \cite{simonyan2014very} and transformers such as BERT model \cite{devlin2018bert} on GLUE \cite{wang-etal-2018-glue}

In our experiments, the inputs and activations are quantized using the same method as \cite{nagel2019data}. The number of bit-wise operation in our evaluation metric is discussed in Appendix \ref{sec:appendix_head_count}.

For SQuant \cite{squant2022}, we use our own implementation, which achieve different accuracy results due to different initial accuracies for baseline models. As for ZeroQ \cite{cai2020zeroq}, we use results provided by SQuant \cite{squant2022}. Similarly to prior work \cite{meller2019same,nagel2019data,squant2022}, we denote W$\cdot$/A$\cdot$ the quantization setup (number of bits for weight quantization and number of bit for activation quantization).

We used Tensorflow implementations of the baseline models from the official repository when possible or other publicly available resources when necessary.
MobileNets and ResNets for ImageNet come from tensorflow models  \href{https://github.com/tensorflow/models/blob/master/research/object_detection/g3doc/tf2_classification_zoo.md}{zoo}.
In object detection, we tested the SSD model with a MobileNet backbone from \href{https://github.com/ManishSoni1908/Mobilenet-ssd-keras}{Manish's git repository}.
Finally, in image semantic segmentation, the DeepLab V3+ model came from \href{https://github.com/bonlime/keras-deeplab-v3-plus}{Bonlime's git repository}.

The networks pre-trained weights provide standard baseline accuracies on each tasks.
The computations of the residues as well as the work performed on the weights were done using the Numpy python's library.
As listed in Table \ref{tab:exec_time}, the creation of the quantized model takes less than a second for a MobileNet V2 as well as for a ResNet 152 without any optimization of the quantization process.
These results were obtained using an Intel(R) Core(TM) i9-9900K CPU @ 3.60GHz.

\begin{table}[!t]
\caption{Processing time on an Intel(R) Core(TM) i9-9900K CPU @ 3.60GHz of the proposed method for different configurations and architectures trained on ImageNet and a quantization in TNN. We note 'm' for minutes and 's' for seconds.}
\label{tab:exec_time}
\centering
\setlength\tabcolsep{4pt}
\begin{tabular}{c|c|c|c}
\hline
k & ensembling & ResNet 152 & MobileNet v2 (1.4) \\
\hline
1 & \xmark & 0.32s & 0.12s \\
2 & \xmark & 0.43s & 0.13s \\
2 & \cmark & 0.43s & 0.13s \\
7 & \xmark & 0.90s & 0.51s \\
7 & \cmark & 0.92s & 0.51s \\
\hline
\end{tabular}
\end{table}

\section{Pre-Processing Time}\label{appendix:preprocessingtime}
\begin{table}[!t]
\caption{\ours, MixMix, GDFQ and ZeroQ 4-bit quantization time in seconds. \ours was quantized such that full-precision accuracy is reached.}
\label{tab:process_time}
\centering
\setlength\tabcolsep{6pt}
\begin{tabular}{c|c|c|c|c}
\hline
Model & ZeroQ & GDFQ & MixMix & \ours\\
\hline
ResNet 50 & 92.1 & $11.10^{3}$ & $18.10^{3}$ & $<$\textbf{1}\\
ResNet 101 & 164.0 & $18.10^{3}$ & $25.10^{3}$ & $<$\textbf{1}\\
ResNet 152 & 246.4 & $24.10^{3}$ & $30.10^{3}$ & \textbf{1.1}\\
MobileNet V2 (0.35) & 27.4 & $3.10^{3}$ & $6.10^{3}$ & $<$\textbf{1}\\
MobileNet V2 (1) & 37.9 & $7.10^{3}$ & $12.10^{3}$ & $<$\textbf{1}\\
\hline
\end{tabular}
\end{table}

As shown on Table \ref{tab:process_time}, in terms of quantization processing time, methods relying on data generation (DG) such as ZeroQ \cite{cai2020zeroq}, DSG \cite{zhang2021diversifying}, GDFQ \cite{xu2020generative} and MixMix \cite{li2021mixmix} are slow as they usually require many forward-backward passes to quantize a trained neural network. \ours, on the other hand, is very fast in addition to being better at preserving the original model accuracy with theoretical control over the error introduced by quantization.

\section{Other Results on ConvNets}\label{appendix:more_results}
\subsection{More Results on ImageNet}

\begin{table}[!t]
\caption{Comparison at equivalent bit-width (\textit{i.e.} no-parallelization) with existing methods in W8/A8 and \ours with W2/A2 with $K=4$.}
\label{tab:compar_sota_equalbits_ternary}
\centering
\setlength\tabcolsep{1pt}
        % \vspace*{-0.5\baselineskip}
    \begin{subtable}{.5\linewidth}
      \centering
        \begin{tabular}{c|c|c|c|c}
         \hline
         DNN & method & year & bits & Acc \\
         \hline
         \multirow{8}{*}{ResNet 50} & \multicolumn{3}{c|}{full-precision} & 76.15 \\
         \cline{2-5}
         & DFQ & ICCV 2019 & 8 & 75.45 \\
         & ZeroQ & CVPR 2020 & 8 & 75.89\\
         & DSG & CVPR 2021 & 8 & 75.87 \\
         & GDFQ & ECCV 2020 & 8 & 75.71 \\
         & SQuant & ICLR 2022 & 8 & 76.04 \\
         & SPIQ & WACV 2023 & 8 & \textbf{76.15} \\
         & \ours & - & 400\% $\times$2 & \textbf{76.15} \\
         \hline
        \end{tabular}
    \end{subtable}%
    \begin{subtable}{.5\linewidth}
        \begin{tabular}{c|c|c|c|c}
         \hline
         DNN & method & year & bits & Acc \\
         \hline
         \multirow{4}{*}{MobNet v2} & \multicolumn{3}{c|}{full-precision} & 71.80 \\
         \cline{2-5}
        & DFQ & ICCV 2019 & 8 & 70.92 \\
        & SQuant & ICLR 2022 & 8 & \textbf{71.68} \\
        & \ours &  - & 400\% $\times$2  & 71.65 \\
         \hline
         \multirow{4}{*}{EffNet B0} & \multicolumn{3}{c|}{full-precision} & 77.10 \\
         \cline{2-5}
        & DFQ & ICCV 2019 & 8 & 76.89 \\
        & SQuant & ICLR 2022 & 8 & 76.93 \\
        & \ours & - & 400\% $\times$2 & \textbf{76.95} \\
        \hline
        \end{tabular}
    \end{subtable}%
\end{table}

In Table \ref{tab:compar_sota_equalbits_ternary}, we provide complementary results to the comparison at equivalent bit-width using ternary quantization. We see that the results are stable across a wide range of architecture.

In the following sections, we show that \ours is also very flexible and can be straightforwardly applied to other computer vision tasks, e.g. object detection and semantic segmentation.

\subsection{Object Detection}
In Fig \ref{fig:ssd}, we report the performance of \ours (as well as DFQ \cite{nagel2019data} for int8 quantization) using SSD-MobileNet as a base architecture for object detection. Overall, we observe a similar trend as in Section \ref{compsota}: \ours reaches significantly lower numbers of bit-wise operations than the naive baseline ($R^{k=1}_{\gamma=100\%}$) and state-of-the-art DFQ \cite{nagel2019data} while preserving the full model accuracy, using either int4, int3 or ternary quantization. Also, once again, the best results are obtained using ternary quantization with high orders (e.g. $k=8$) and sparse residuals (e.g. $\gamma=25\%$): as such, the mAP of the best tested configuration, $R^{(8)}_{25\%}$, reaches 68.6\% for $6.38e^9$ bit-wise operations, vs. 67.9\% for $3.36e^{10}$ bit-wise operations for DFQ \cite{nagel2019data}.

\subsection{Semantic Segmentation}
In Fig \ref{fig:deeplab}, we report the performance of \ours for image segmentation using a DeepLab v3+ architecture. Similarly to the previous tasks, \ours is able to very efficiently quantize a semantic segmentation network, whether it is in int4 or higher (where order 2 is sufficient to reach the full precision mIoU), or in int3/ternary quantization. In the latter case, once again, it is better to use sparse, high order expansions: for instance, we were able to retrieve the full accuracy using $R^{(9)}_{50\%}$ and ternary quantization, dividing by 10 the number of bit-wise operations as compared to the original model. This demonstrates the robustness of \ours to the task and architecture.

\begin{figure}[!t]
\centering
\includegraphics[width = 0.49\linewidth]{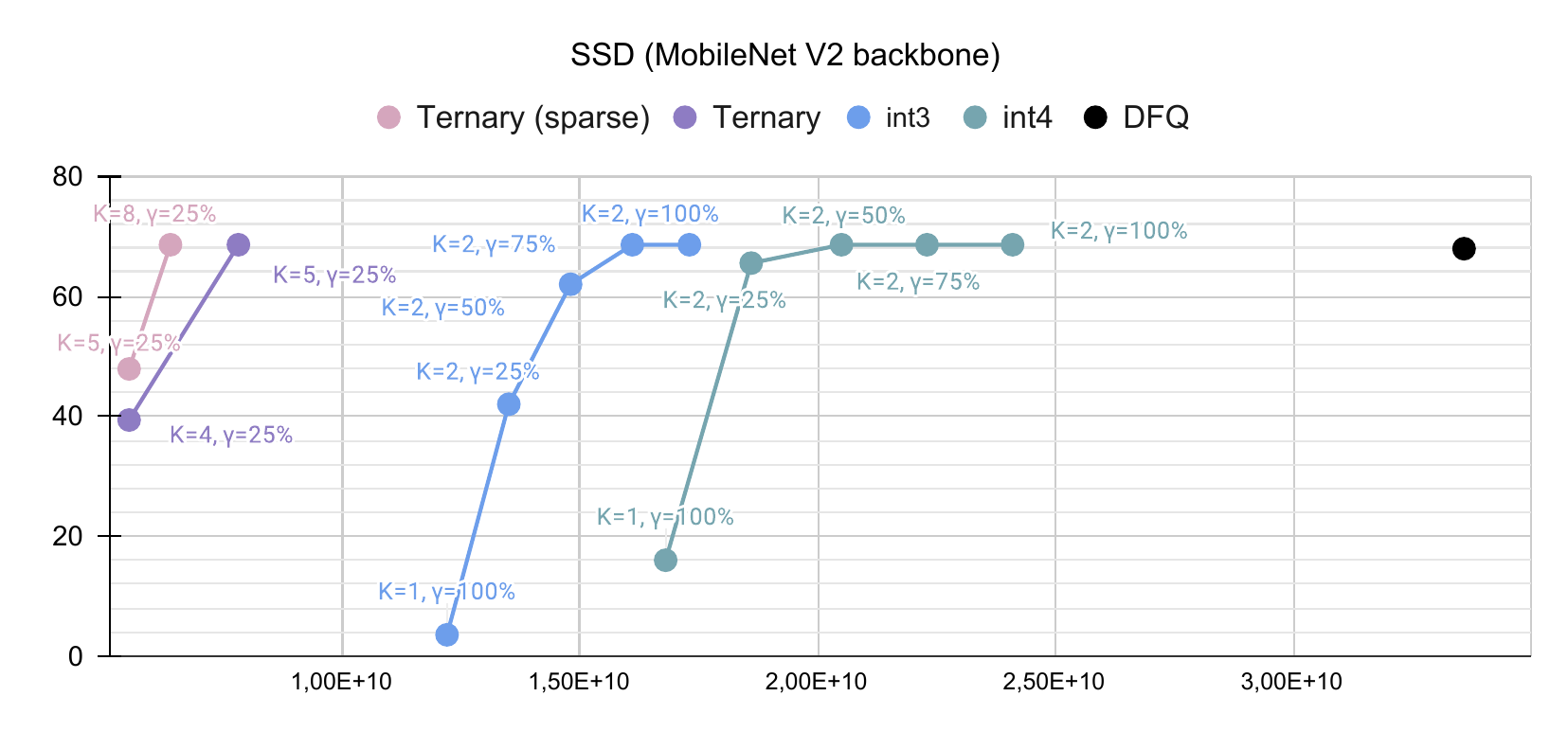}
\includegraphics[width = 0.49\linewidth]{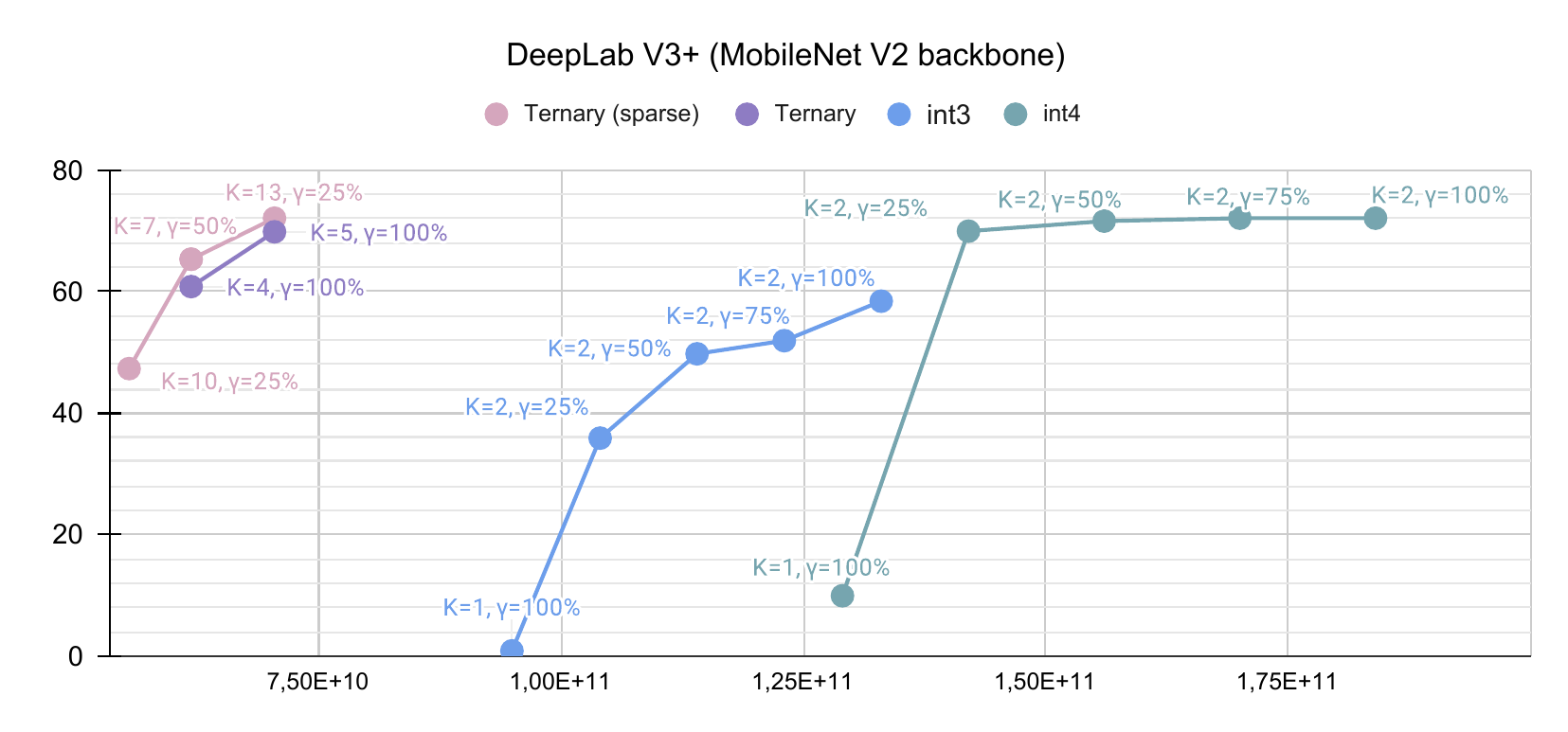}
\caption[Two numerical solutions]{(a) Mean average precision (mAP) of a SSD with MobileNet V2 backbone on Pascal VOC for object detection. We add the performance of a data-free quantization solution, DFQ \cite{nagel2019data} for comparison. (b) Mean intersection over union (mIoU) of a Deeplab V3+ with MobileNet V2 backbone on CityScapes for semantic segmentation.}
\label{fig:ssd}
\label{fig:deeplab}
\end{figure}

\section{More Parallelization Results}\label{appendix:parallelization_extra}

\begin{table}[!t]
\caption{Accuracy for ResNet 50 on ImageNet. \ours (with ensembling)  achieves excellent accuracy for both standard and low-bit quantization, more-so using high-order sparse expansions, vastly outperforming previous state-of-the-art data-free quantization approaches such as OCS, DFQ, SQNR and SQuant, and even approaches that require fine-tuning as MixMix, ZeroQ, DSG and GDFQ.}
\label{tab:compar_sota_bis}
\centering
\setlength\tabcolsep{6pt}
    \begin{subtable}{.49\linewidth}
      \centering
        \begin{tabular}{c|c|c|c|c}
        \hline
        method & year & no-DG & $b$ & accuracy\\
        \hline
        \multicolumn{4}{c|}{full-precision} & 76.15\\
        \hline
        OCS & ICML 2019 & \cmark & 8 & 75.70 \\
        DFQ	& iCCV 2019 & \cmark & 8 & 76.00 \\
        SQNR & ICML 2019 & \cmark & 8 & 75.90 \\
        ZeroQ & CVPR 2020 & \xmark & 8 & 75.89 \\
        DSG & CVPR 2021 & \xmark & 8 & 75.87 \\
        SQuant & ICLR 2022 & \cmark & 8 & 76.04\\ %source spiq
        \ours & - & \cmark & 8 & \textbf{76.15} \\
        \hline
        \end{tabular}
        \end{subtable}
    \begin{subtable}{.49\linewidth}
      \centering
        \begin{tabular}{c|c|c|c|c}
        \hline
        method & year & no-DG & $b$ & accuracy\\
        \hline
        \multicolumn{4}{c|}{full-precision} & 76.15\\
        \hline
        OCS & ICML 2019 & \cmark & 4 & 0.1\\
        DSG &  CVPR 2021 & \xmark & 4 & 23.10\\ % source SQuant
        GDFQ & ECCV 2020 & \xmark & 4 & 55.65\\ % source SQuant
        SQuant & ICLR 2022 & \cmark & 4 & 68.60\\
        MixMix & CVPR 2021 & \xmark & 4 & 74.58\\
        AIT & CVPR 2022 & \xmark & 4 & 66.47 \\
        \ours & - & \cmark & 4 & \textbf{76.13} \\
        \hline
        \end{tabular}
        \end{subtable}
\end{table}

In Table \ref{tab:compar_sota_bis}, we compare \ours and the state-of-the-art data-free quantization methods OCS \cite{zhao2019improving}, DFQ \cite{nagel2019data}, SQNR \cite{meller2019same} and methods that involve synthetic data, such as ZeroQ \cite{cai2020zeroq}, DSG \cite{zhang2021diversifying}, GDFQ \cite{xu2020generative} and MixMix \cite{li2021mixmix}. We report results on int8 and int4 quantization: for the former case, we use order 2 residual expansion without ensembling (as the inference time are equivalent with the baseline model, see Fig \ref{fig:inference}). For int4 quantization, we report results at order $4$ with an ensemble of 2 predictors, each containing two orders, \textit{i.e.} $K_1=K_2=2$. Using this setup ensures that the inference run-times are comparable (see Fig \ref{fig:inference}).

First, given a budget of bit-wise operations that achieves equivalent expected run-time (Fig \ref{fig:inference}), \ours can achieve higher accuracy than existing approaches: e.g. on both MobileNet V2 and ResNet, in int4, \ours outperforms SQuant \cite{squant2022}, the best data-free method that does not involve data-generation (DG), by $16.3$ points and $7.5$ respectively. Note that other methods such as OCS (also involving structural changes in the neural network architecture) considerably underperforms, especially on int4 where the accuracy drops to $0.1$. \ours, however, fully preserves the floating point accuracy.

Second, \ours even outperforms the impressive (yet time consuming) data-generation (DG) based methods such as ZeroQ \cite{cai2020zeroq}, DSG \cite{zhang2021diversifying}, GDFQ \cite{xu2020generative} and MixMix \cite{li2021mixmix}. The difference is more noticeable on low bit quantization, e.g. $b=4$. Nevertheless, \ours improves the accuracy on this benchmark by $6.35\%$ top-1 accuracy. Similarly, to the results on MobileNet V2, on ResNet-50, \ours reaches accuracies very close to the full precision model ($76.13$), significantly outperforming its closest contender, MixMix ($74.58$).

\section{Operation Head-count}\label{sec:appendix_head_count}
Let $W$ be the real-valued weights of a $d\times d$ convolutional layer on input feature maps of shape $D\times D\times n_i$ and $n_o$ outputs and stride $s$.
Then the convolutional product requires $d^2\frac{D^2}{s^2} n_i n_o$ floating point multiplications.
The quantized layer requires two rescaling operations (for the quantization of the inputs and the $Q^{-1}$ operation) and an int-$b$ convolution, i.e. $n_i D^2 + \frac{D^2}{s^2} n_o$ floating point multiplications and $d^2\frac{D^2}{s^2} n_i n_o$ int-$b$ multiplications.
Note that the number of additions remains unchanged.
According to \cite{klarreich2019multiplication} the lowest complexity for $b$-digits scalar multiplication is $o(b\log(b))$ bit operations.
This is theoretically achieved using Harvey-Hoeven algorithm (also the asymptomatic bound has yet to be proved).
We use this value as it is the least favorable setup for the proposed method.
As a consequence the number $O_{\text{original}}$ bit operations required for the original layer, $O_{R^{(1)}}$ the number of bit operations for the naively quantized layer and $O_{R^{(k)}}$ for the $\text{i}^{\text{th}}$ order residual quantization expansion are
\begin{equation}
    \begin{cases}
    O_{\text{original}} = D^2\frac{d^2n_i n_o}{s^2}  32\log(32) \\
    O_{R^{(1)}} = D^2 \left[(n_i  + \frac{n_o}{s^2}) 32\log(32) + \frac{d^2n_i n_o}{s^2} b \log(b)\right] \\
    O_{R^{(k-1)}} = D^2 \left[(n_i  + \frac{n_o}{s^2}) 32\log(32) + k \frac{d^2n_i n_o}{s^2} b \log(b)\right]
    \end{cases}
\end{equation}
Using this result we can estimate the maximum order of expansion before which the number of operations in $f^{(k)}$ exceeds the $O_{\text{baseline}}$.
Note that in the case of fully-connected layers, $D = 1$, $s = 1$ and $d = 1$.
This metric doesn't consider the fact that the added operations can be performed in parallel.

\end{document}